\def\eqref#1{equation~\ref{#1}}
\def\1{\bm{1}}
\def\rmA{{\mathbf{A}}}
\def\ve{{\bm{e}}}
\def\vt{{\bm{t}}}
\def\vx{{\bm{x}}}
\DeclareMathAlphabet{\mathsfit}{\encodingdefault}{\sfdefault}{m}{sl}
\SetMathAlphabet{\mathsfit}{bold}{\encodingdefault}{\sfdefault}{bx}{n}
\def\gG{{\mathcal{G}}}
\theoremstyle{plain}
\newtheorem{theorem}{Theorem}[section]
\newtheorem{proposition}[theorem]{Proposition}
\newtheorem{lemma}[theorem]{Lemma}
\theoremstyle{definition}
\newtheorem{definition}[theorem]{Definition}
\theoremstyle{remark}
\begin{document}
\title{A Single Architecture for Representing Invariance \\ Under Any Space Group}

\author[1]{Cindy Y. Zhang\thanks{Correspondence to: \texttt{cindyz@princeton.edu}}}
\author[2]{Elif Ertekin}
\author[3]{Peter Orbanz}
\author[1]{Ryan P. Adams}

\affil[1]{Department of Computer Science, Princeton University}
\affil[2]{Department of Mechanical Science and
Engineering, University of Illinois Urbana-Champaign}
\affil[3]{Gatsby Computational Neuroscience Unit, University College London}
\date{}

\maketitle
\begin{abstract}
Incorporating known symmetries in data into machine learning models has consistently improved predictive accuracy, robustness, and generalization. However, achieving exact invariance to specific symmetries typically requires designing bespoke architectures for each group of symmetries, limiting scalability and preventing knowledge transfer across related symmetries. 
In the case of the space groups—symmetries critical to modeling crystalline solids in materials science and condensed matter physics—this challenge is particularly salient as there are 230 such groups in three dimensions.
In this work we present a new approach to such crystallographic symmetries by developing a single machine learning architecture that is capable of adapting its weights automatically to enforce invariance to any input space group.
Our approach is based on constructing symmetry-adapted Fourier bases through an explicit characterization of constraints that group operations impose on Fourier coefficients. Encoding these constraints into a neural network layer enables weight sharing across different space groups, allowing the model to leverage structural similarities between groups and overcome data sparsity when limited measurements are available for specific groups. We demonstrate the effectiveness of this approach in achieving competitive performance on material property prediction tasks and performing zero-shot learning to generalize to unseen groups.
\end{abstract}
\renewcommand*{\thefootnote}{\arabic{footnote}}

\section{Introduction}
Many physical systems are governed by known symmetries, from the arrangement of atoms in a crystal to the laws of motion. By encoding these symmetries—typically described by mathematical groups—directly into a model's architecture, we can enforce that its predictions respect the underlying physical constraints. These group symmetrization techniques act as a powerful inductive bias, leveraging \emph{a priori} knowledge of geometric and physical invariances to enhance model efficiency and reduce training complexity \citep{thomas2018tensor, jing2020learning, satorras2021n, zhang2023artificial, liu2023symmetry}. 
However, each new type of desired group invariance or equivariance has generally required a new neural network architecture.
In this work we do something different: \emph{we develop a single architecture that can adapt and share structure across the crystallographic groups.}

Crystallographic groups are important in several different domains \citep{ten1973crystallographic, hinuma2017band, watanabe2018space, mirramezani2024designing}, but most commonly arise in materials science and condensed matter physics, where they describe the transformations that preserve structure in a crystalline solid.
The groups are discrete sets of transformations on Euclidean space.
They each include translations to capture periodicity, but also include other isometries such as rotations, reflections, glides, inversions, and screws.
It is desirable to capture these symmetries in machine learning models for, e.g., building generative models of crystalline solids \citep{chang2025space, zeni2025generative} and neural ans\"atze for solving the associated electronic Schr\"odinger equation \citep{li2022ab, huang2025diagonal}.

A challenge of symmetrized machine learning models for crystallographic groups, however, is data sparsity: there are 230 groups in three dimensions and even the most commonly-used benchmark \citep{jain2013commentary} has only about 200,000 data points, averaging \emph{fewer than 1,000 examples per group}.
In practice, the data distribution is heavily skewed, with many groups having few or no examples for certain properties.
Thus developing a separate machine learning architecture for each group is unlikely to be successful; rather we must find a way to inform our models with group symmetry while also sharing parameters across the different groups.
We develop such an architecture in this work and show that it can generalize even to groups it has not been trained on.

Our approach centers on a new algorithm for constructing symmetry-adapted Fourier bases for the crystallographic groups.
We analytically derive the explicit constraints imposed by crystallographic symmetries on the Fourier coefficients of functions defined over Euclidean space. 
We prove that this symmetry-adapted Fourier basis spans the space of continuous, square-integrable functions invariant to the given crystallographic group. 
This allows us to represent crystal structures linearly using atom positions expanded in the symmetry-adapted Fourier basis.

Crystalline materials are typically represented as graphs within ML pipelines, which need to be augmented with additional edges or features to encode the periodic lattice structure explicitly \citep{xie2018crystal, yan2022periodic}. 
In our framework, rather than separately augmenting each graph with information about the crystal lattice, we have a layer that encodes positions in the standard Fourier basis and then multiplies the encodings by a precomputed, group-dependent adjacency matrix that defines the constraints between Fourier coefficients. 
This symmetry-adapted encoding can serve as input to further network layers to produce a crystallographic group-invariant output, allowing the parameters of the neural network to be shared across all space groups. 

We empirically verify the effectiveness of this approach in learning positional encodings within crystal structures that accurately reflect orbit distances. We further validate our framework by using these positional encodings in a Transformer model, improving performance over standard positional encodings and achieving competitive performance across several benchmark tasks for material property prediction. 
We also show empirically that the approach can perform zero-shot learning and generalize across space groups for which it has never seen data.
Our framework can be flexibly used as an encoding module that integrates with existing ML models to capture the exact symmetries of crystallographic data.

The paper is structured as follows. We start by defining crystallographic groups and presenting the intuition from one-dimensional Fourier series that extends to a Fourier basis for functions invariant to crystallographic groups. 
We then derive the exact constraints on Fourier coefficients in Section \ref{sec:derivation} and their dual graph representation in Section \ref{sec:dual_graphs}. The adjacency matrices of these graphs are key to the architecture introduced in Section \ref{sec:cft} that adapts its invariance properties to different crystallographic groups. 
We contextualize our findings with a discussion of related work in Section \ref{sec:rel_work}.
The details of our experiments and results are in Section \ref{sec:experiments}.

\section{Background}\label{sec:background}
In this section, we describe invariance under crystallographic groups and define the $G$-invariant functions of interest. We then present a key result from \citet{adams2023representing} proving that a Fourier representation exists for continuous $G$-invariant functions.

\subsection{Crystallographic Group Invariance}

Crystallographic groups are discrete groups of isometries that tile Euclidean space with a repeating fundamental region. We consider the fundamental region $\Pi$ to be a convex polytope. Copies of $\Pi$ under the group's isometries fill $\mathbb{R}^n$ without gaps or overlaps. Only a finite number of crystallographic groups exist in each dimension; there are 17 such groups on $\mathbb{R}^2$, called the \emph{wallpaper groups}, and 230 such groups on $\mathbb{R}^3$, called the \emph{space groups}, which classify all possible symmetries of crystals. The tiling patterns of the wallpaper groups are shown in Appendix \ref{sec:wallpaper}.

Each isometry $\phi$ in a crystallographic group $G$ takes the form $\phi(\bx) = \rmA \bx + \bt$, where $\rmA \in \mathbb{R}^{n \times n}$ is an orthonormal matrix and $\bt \in \mathbb{R}^n$ is a translation vector. By definition, $G$ contains translations by any integer combination of $n$ linearly independent basis vectors $\btau_1, \ldots, \btau_n$, making it both discrete and infinite.
A function $f: \mathbb{R}^n \rightarrow \mathbb{R}$ is $G$-invariant if it satisfies:
\begin{equation}
    f(\phi (\bx)) = f(\bx) \qquad \text{ for all } \phi \in G \text{ and } \bx \in \mathbb{R}^n.
\end{equation}
$G$-invariance is crucial in modeling physical phenomena in crystals. According to Neumann's principle, if a physical system is invariant with respect to the symmetries of a group $G$, then any physical observables of that system must also be invariant to the same symmetries \citep{bradley2009mathematical}. 
Formally, this invariance implies that $f$ can be expressed solely in terms of equivalence classes of points, known as orbits, defined by the group action. 
The \emph{orbit} of a point $\bx \in \mathbb{R}^n$ is the set $\{\phi(\bx) \mid \phi \in G\}$ of all points mapping to it under the group's symmetries. 

The quotient set $\mathbb{R}^n/G$ has a continuous bijective mapping to the fundamental region $\Pi$, which contains exactly one point from each orbit. Because an invariant function is fully determined by its values on the fundamental region $\Pi$, this region serves as a natural domain for analysis.

\subsection{Fourier Representations}
In order to construct $G$-invariant functions, we want a set of basis functions that are periodic. One natural choice is a Fourier series. In the one-dimensional setting, a smooth function $f:\mathbb{R}\to\mathbb{R}$ of period \(L\) admits the well–known expansion
\[ f(x) = a_0 + \sum_{k=1}^\infty a_k\cos(2\pi kx/L) + b_k\sin(2\pi kx/L).\]
These sine and cosine modes are exactly the eigenfunctions of the one–dimensional Laplacian,
\[ -\frac{d^2}{dx^2} e = \lambda e\quad \text{subject to } \quad 
  e(x+L)=e(x), e'(x+L)=e'(x), \]
with eigenvalues $\lambda=(2\pi k/L)^2$.  Two key insights carry forward to crystallographic groups: (1) Translation invariance imposes boundary conditions that yield a discrete spectrum of Laplace eigenfunctions, and (2) the resulting eigenfunctions form an orthonormal basis for all square-integrable $L$-periodic functions.

\citet{adams2023representing} show that for any crystallographic group $G$, there exists a sequence of $G$-invariant functions $e_1, e_2, \ldots$ on $\mathbb{R}^n$ such that any $G$-invariant function in $L_2(\Pi)$ can be represented as a linear combination of these basis functions. This basis generalizes the standard Fourier series to account for all symmetries in a crystallographic group, not just translations. The work characterizes these basis functions, $e_i$, as the solutions to the constrained partial differential equation 
    \begin{equation}\label{eq:laplace}
        \begin{aligned}
            - \Delta &e = \lambda e \\
            \text{subject to } \qquad &e = e \circ \phi \text{ for all } \phi \in G.
        \end{aligned}
    \end{equation}
While this result establishes the existence of a $G$-invariant Fourier basis, the construction in \citet{adams2023representing} relies on numerical methods to approximate the eigenfunctions of the Laplace operator. We introduce an alternative, analytical approach in the following section.

\section{Methods}\label{sec:methods}
In this section, we present the theoretical and algorithmic foundations for our adaptive framework. First, we derive the analytical constraints that invariance to a crystallographic group imposes on a function's Fourier series coefficients. Second, we show how these constraints define a complete, $G$-invariant basis, which admits a dual representation in terms of graphs. These graphs allow us to formalize a constructive algorithm for the $G$-invariant basis. Finally, we use these basis functions to define the \emph{Crystal Fourier Transformer (CFT)}, a space group adaptive architecture.

\subsection{Deriving Group Constraints in Fourier Space}\label{sec:derivation}
Our goal is to find a universal representation for the constraints imposed by invariance to a crystallographic group $G$. We achieve this by analyzing the effect of group operations on the Fourier transform of a $G$-invariant function $f: \mathbb{R}^n \rightarrow \mathbb{R}$.

The translation symmetries present in any crystal structure require that $f$ be periodic. This periodicity constrains the support of its Fourier transform, $F(\bomega)$, to a discrete set of frequencies known as the \emph{reciprocal lattice}, $\mathcal{L}^*$. For any frequency $\bomega$ not on this lattice $F(\bomega)$ must be zero.

The non-translation isometries in $G$ (e.g., rotations, reflections, glides) impose additional constraints that relate the values of the Fourier coefficients at different points on the reciprocal lattice. The following proposition makes this relationship precise.

\begin{figure}[t]
    \centering
    \includegraphics[width=1.0\linewidth]{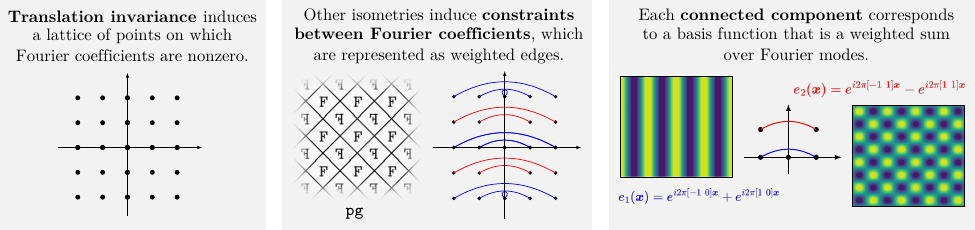}\vspace{-1em}
    \caption{We construct the Fourier basis for $G$-invariant functions using a dual representation in terms of graphs. The translations in a crystallographic group $G$ give rise to a discrete lattice of points corresponding to the Fourier series. We represent constraints induced by additional isometries as weighted edges between these points. For wallpaper group \texttt{pg}, which contains the translations and glide reflections present in the tiling pattern, we get edges of weight $-1$ (red) and $+1$ (blue). Each clique in the resulting graph can be expanded as one of the pictured Fourier basis functions.} 
    \label{fig:fouriergraph}
\end{figure}
\begin{proposition}\label{prop:constraint}
Let $f: \bbR^n \rightarrow \bbR$ be a $G$-invariant function and let $F(\bomega)$ be its Fourier transform. All symmetry operations $\phi \in G$ have the form $\phi(\bx) = \rmA\bx + \bt$, where $\rmA$ is an orthogonal matrix and $\bt$ is a translation vector. For any $\phi \in G$, the Fourier coefficients must satisfy the following relation for all $\bomega \in \mathcal{L}^*$:
\begin{equation}\label{eq:main-coeff-prop}
F(\bomega) = e^{i2\pi \bomega^\top \rmA^\top \bt}F(\rmA \bomega).
\end{equation}
\end{proposition}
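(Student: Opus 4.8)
The plan is to obtain \eqref{eq:main-coeff-prop} from a single change of variables in the Fourier integral, using the $G$-invariance of $f$ together with the orthogonality of $\rmA$. Adopting the convention $F(\bomega)=\int f(\bx)\,e^{-i2\pi\bomega^\top\bx}\,d\bx$ that produces the phase sign in the statement, I would start not from $F(\bomega)$ but from the coefficient at the transformed frequency,
\begin{equation}
F(\rmA\bomega)=\int_{\bbR^n} f(\bx)\,e^{-i2\pi(\rmA\bomega)^\top\bx}\,d\bx,
\end{equation}
and substitute $\bx=\phi(\by)=\rmA\by+\bt$. Since $\rmA$ is orthogonal, $|\det\rmA|=1$, so $\phi$ is a measure-preserving bijection of $\bbR^n$ and $d\bx=d\by$.

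The next step is purely algebraic: orthogonality gives $(\rmA\bomega)^\top(\rmA\by+\bt)=\bomega^\top\by+\bomega^\top\rmA^\top\bt$, and invariance gives $f(\phi(\by))=f(\by)$. The term $\bomega^\top\rmA^\top\bt$ is independent of $\by$, so it pulls out as a constant phase $e^{-i2\pi\bomega^\top\rmA^\top\bt}$, leaving exactly $F(\bomega)$ inside the integral. This yields $F(\rmA\bomega)=e^{-i2\pi\bomega^\top\rmA^\top\bt}F(\bomega)$, which rearranges immediately to the claimed identity.

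The step I expect to require the most care is justifying the Fourier transform itself: a $G$-invariant $f$ is periodic under the full-rank translation sublattice of $G$, hence not integrable over $\bbR^n$, so $F$ cannot be the ordinary $L^1$ transform. I would interpret $F$ either as the Fourier series coefficients of $f$ over the unit cell $V$ of the translation lattice $\mathcal{L}$, or as the coefficients of the tempered distribution $\sum_{\bomega\in\mathcal{L}^*}F(\bomega)\,\delta_{\bomega}$ supported on the reciprocal lattice. In the series formulation the change of variables runs over $V$; because $\phi$ maps $\mathcal{L}$ onto itself and merely permutes translates of $V$, periodicity of the integrand lets me carry $\phi(V)$ back to $V$ without changing the integral, so the same cancellation survives.

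Finally, I would check that the identity is well-posed, namely that $\rmA\bomega\in\mathcal{L}^*$ whenever $\bomega\in\mathcal{L}^*$, so that both sides are evaluated at lattice points. This holds because the point-group part $\rmA$ preserves $\mathcal{L}$ and therefore $\mathcal{L}^*$: for any $\bv\in\mathcal{L}$, $(\rmA\bomega)^\top\bv=\bomega^\top(\rmA^{-1}\bv)\in\mathbb{Z}$ since $\rmA^{-1}\bv\in\mathcal{L}$. With the support preserved and the phase cancellation established, \eqref{eq:main-coeff-prop} follows.
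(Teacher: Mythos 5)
Your proof is correct and follows essentially the same route as the paper's: a change of variables $\bx = \rmA\by + \bt$ in the Fourier integral, using $|\det \rmA| = 1$ from orthogonality, the identity $(\rmA\bomega)^\top(\rmA\by+\bt) = \bomega^\top\by + \bomega^\top\rmA^\top\bt$, and $G$-invariance of $f$ to pull out the constant phase and recover $F(\bomega)$. If anything, your version is more careful than the paper's, which carries out the same formal computation over $\bbR^n$ without addressing the non-integrability of a periodic $f$ (your Fourier-series/tempered-distribution interpretation) or explicitly verifying that $\rmA$ maps $\mathcal{L}^*$ to itself.
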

The proof is in Appendix \ref{sec:constraints}.
Equation \ref{eq:main-coeff-prop} states that the Fourier coefficients at frequencies $\bomega$ and $A\bomega$ are coupled by a phase factor $e^{i2\pi \bomega^\top \rmA^\top \bt}$ determined by the specific isometry $\phi$. These constraints partition the reciprocal lattice into orbits of related frequencies and define the exact structure of the invariant basis functions, as formalized in our main result.

\begin{theorem}\label{thm:main}
 Let $G$ be a crystallographic group and $\mathcal{L}^*$ be its reciprocal lattice. The linear components ($\rmA_\phi$) of the group's transformations ($\phi(\bx) = \rmA_\phi \bx + \bt_\phi$) partition $\mathcal{L}^*$ into disjoint orbits $\mathcal{O}$, where frequencies $\bomega_1, \bomega_2 \in \mathcal{O}$ if $\bomega_2 = \rmA_\phi \bomega_1$ for some $\phi \in G.$

Assume an orbit $\mathcal{O}$ is phase-consistent, meaning for any $\bomega \in \mathcal{O}$ and any transformation $\phi$ that maps $\bomega$ to itself (i.e., $\rmA_\phi \bomega = \bomega$), the corresponding phase factor $e^{i2\pi\bomega^{\top}\rmA_\phi^{\top}\bt_\phi}$ is equal to 1.

\noindent Then, a corresponding basis function $e_{\mathcal{O}}(\bx)$ can be constructed as: 
\begin{equation}
    \textstyle e_{\mathcal{O}}(\bx) = \sum_{\bomega \in \mathcal{O}} w_{\bxi \to \bomega} \cdot e^{i2\pi\bomega^{\top}\bx}
\end{equation}
where the complex coefficients $w_{\bxi \to \bomega}$ are uniquely determined by the phase constraint in \eqref{eq:main-coeff-prop}, relative to an arbitrary reference frequency $\bxi \in \mathcal{O}$ for which $w_{\bxi \to \bxi} := 1$. Every continuous, $G$-invariant function $f$ admits a uniformly convergent expansion 
\[ \textstyle f = \sum_\mathcal{O} c_\mathcal{O} e_\mathcal{O}, \]
where the family $\{e_\mathcal{O}\}$ is constructed over all phase-consistent orbits in $\mathcal{L}^*.$
\end{theorem}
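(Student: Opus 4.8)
\textbf{(Plan.)} The argument splits into three parts: the group-theoretic partition of $\mathcal{L}^*$, the well-posed construction of each $e_\mathcal{O}$, and the analytic completeness-and-convergence claim. The first two are essentially algebraic and I expect them to be routine once set up correctly; the real difficulty is the uniform convergence in the last part.

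First I would record the partition. The linear parts $P=\{\rmA_\phi:\phi\in G\}$ form the point group of $G$; since every $\phi\in G$ is a symmetry of the crystal it maps the direct lattice to itself, so $P$ acts on the reciprocal lattice $\mathcal{L}^*$ by orthogonal maps. By the crystallographic restriction $P$ is finite, and the relation $\bomega_2=\rmA_\phi\bomega_1$ is an equivalence relation because $P$ is a group, giving a partition of $\mathcal{L}^*$ into finite orbits $\mathcal{O}$. Finiteness of each orbit makes every candidate $e_\mathcal{O}$ a finite sum, so no convergence issue arises at the level of an individual basis function.

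Next, the construction and its well-definedness, which is where phase-consistency is used. Rewriting \cref{prop:constraint}, every invariant $F$ satisfies $F(\bomega)=\chi(\phi,\bomega)\,F(\rmA_\phi\bomega)$ with $\chi(\phi,\bomega):=e^{i2\pi\bomega^\top\rmA_\phi^\top\bt_\phi}$. The plan is to define $w_{\bxi\to\bomega}$ by transporting this relation along a group element relating $\bxi$ and $\bomega$, then show independence of the choice. The key computation is the cocycle identity
\[ \chi(\psi\phi,\bomega)=\chi(\phi,\bomega)\,\chi(\psi,\rmA_\phi\bomega), \]
which follows from $\rmA_{\psi\phi}=\rmA_\psi\rmA_\phi$, $\bt_{\psi\phi}=\rmA_\psi\bt_\phi+\bt_\psi$, and orthogonality $\rmA_\psi^\top\rmA_\psi=\rmI$. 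Two group elements sending $\bxi$ to $\bomega$ differ by a stabilizer element of $\bxi$, and by the cocycle identity their induced phases agree precisely when every stabilizer element contributes phase $1$ — which is exactly the phase-consistency hypothesis. Hence $w_{\bxi\to\bomega}$ is well defined, $e_\mathcal{O}$ satisfies $e_\mathcal{O}=e_\mathcal{O}\circ\phi$ for all $\phi\in G$ by construction, and because $\rmA_\phi$ is orthogonal all $\bomega\in\mathcal{O}$ share $\|\bomega\|$, so $e_\mathcal{O}$ is a $G$-invariant eigenfunction of $-\Delta$ with eigenvalue $(2\pi\|\bomega\|)^2$, matching \cref{eq:laplace}.

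Finally, completeness and convergence, which I expect to be the main obstacle. Since $G$ contains the translation lattice, any $G$-invariant $f$ is periodic and has an ordinary Fourier series on the torus $\mathbb{R}^n/\mathcal{L}$. I would then impose the remaining constraints of \cref{eq:main-coeff-prop}: on a non-phase-consistent orbit some stabilizer element forces $F(\bomega)=e^{i\theta}F(\bomega)$ with $e^{i\theta}\neq 1$, so $F$ vanishes there, while on a phase-consistent orbit all coefficients are pinned to one scalar $c_\mathcal{O}$ through the weights. Regrouping the series by orbit yields $f=\sum_\mathcal{O}c_\mathcal{O}e_\mathcal{O}$, and matching against the complete orthonormal eigenbasis of \citet{adams2023representing} shows $\{e_\mathcal{O}\}$ spans all $G$-invariant $L^2(\Pi)$ functions. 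The delicate point is the claimed \emph{uniform} convergence: for merely continuous $f$ the partial sums of a multidimensional Fourier series need not converge uniformly, so I would not argue through raw partial sums. Instead I expect to obtain uniform convergence via Cesàro/Fejér summation, whose means converge uniformly for continuous functions on the torus and respect the orbit grouping, or equivalently by first proving the expansion for the dense subclass of smooth $G$-invariant functions — where elliptic regularity for $-\Delta$ and Sobolev embedding give a uniformly convergent eigenfunction expansion — and then passing to the continuous limit. Pinning down this mode of convergence, rather than the algebraic construction of the basis, is where I anticipate the real work.
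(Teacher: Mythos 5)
Your proposal is correct and follows essentially the same route as the paper's own proof: your cocycle identity $\chi(\psi\phi,\bomega)=\chi(\phi,\bomega)\,\chi(\psi,\rmA_\phi\bomega)$ is exactly the paper's path-concatenation lemma, your stabilizer argument for the well-definedness of $w_{\bxi\to\bomega}$ is the paper's cycle-product/phase-consistency argument in graph language, and the completeness step (coefficients forced to vanish on inconsistent orbits, regrouping the torus Fourier series by orbit, and invoking \citet{adams2023representing}) is identical. If anything, you are more careful than the paper on the final uniform-convergence claim, which the paper's proof establishes only up to $L^2(\Pi)$ spanning and otherwise defers to the cited representation theorem, rather than arguing via Fej\'er summation or density of smooth invariant functions as you suggest.
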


The proof is in Appendix ~\ref{sec:proof_main}. Theorem \ref{thm:main} transforms the problem of constructing $G$-invariant functions from a continuous problem on $\bbR^n$ to a discrete problem on the reciprocal lattice. Each basis function is a specific linear combination of Fourier modes, where the coefficients are uniquely determined by the group's symmetry constraints. The phase-consistency condition filters out frequencies that cannot support a globally consistent invariant function.

While Theorem \ref{thm:main} provides the analytical form of the basis, it does not specify a computational procedure for identifying these orbits and their coefficients. To operationalize this, we introduce a dual graph representation that makes the group constraints concrete and computationally tractable.

\subsection{Dual Graphs of Crystallographic Fourier Series}\label{sec:dual_graphs}

The algebraic relationships and phase constraints from Proposition \ref{prop:constraint} can be captured as a directed graph on the reciprocal lattice. In this graph, nodes represent frequencies $\bomega \in \mathcal{L}^*$, and each symmetry operation $\phi(\bx) = \rmA\bx + \bt$ induces a set of directed edges. For each $\bomega$, an edge is drawn from~$\bomega$ to $\rmA\bomega$ with a complex weight equal to the phase factor $e^{i2\pi \bomega^\top \rmA^\top \bt}$. 

This graph structure is a key computational tool for encoding the basis functions. The phase-consistent orbits of Theorem \ref{thm:main} correspond exactly to the connected components of this graph after nodes with inconsistent self-loops are removed. The basis function coefficients $w_{\bxi \to \bomega}$ are the product of edge weights along any path from a reference node $\bxi$ to another node $\bomega$ within a component. This framework leads directly to Algorithm ~\ref{alg:basis_construction} for constructing the full symmetry-adapted basis for any space group $G$. Since the reciprocal lattice $\mathcal{L}^*$ is infinite, we in practice construct the reciprocal lattice up to a maximum frequency radius, yielding a finite set of basis functions that form a practical approximation to the complete basis. \bigskip

\begin{algorithm}
\caption{Symmetry-Adapted Fourier Basis Construction}
\label{alg:basis_construction}
\begin{algorithmic}[1]
\Require Space group $G$, reciprocal lattice $\mathcal{L}^*$
\State Construct constraint graph $\mathcal{G}$ with nodes $\mathcal{L}^*$ and edges $\bomega \to \rmA\bomega$ weighted by $e^{i2\pi \bomega^\top \rmA^\top \bt}$ for each $\phi(\bx) = \rmA\bx + \vt \in G$
\State Remove nodes with inconsistent self-loops (weight $\neq 1$) from $\mathcal{G}$
\State Identify phase-consistent orbits $\{\mathcal{O}_k\}$ as connected components of $\mathcal{G}$
\State Construct basis functions $e_{\mathcal{O}_k}(\bx) = \sum_{\bomega \in \mathcal{O}_k} w_{\bxi \to \bomega} \cdot e^{i2\pi\bomega^{\top}\bx}$ where $w_{\bxi \to \bomega}$ is the product of edge weights along any path from node $\bxi$ to node $\bomega$
\State \Return Basis $\{e_{\mathcal{O}_k}\}$
\end{algorithmic}
\end{algorithm}

\noindent\textbf{Example of basis construction. }
We describe the construction of the constraint graph and the corresponding basis for the wallpaper group $\texttt{pg}$, which consists of translations and glide reflections. For simplicity, we'll use the standard basis vectors $\ve_1 = [1 \hspace{0.8em} 0]^\top$ and $\ve_2 = [0 \hspace{0.8em} 1]^\top$ for the translations. Invariance to these standard basis shifts induces the reciprocal lattice $\mathbb{Z}^2$ where Fourier coefficients are nonzero. Next we incorporate glide reflections, given by the form
\begin{equation}
    \phi(\vx) = \begin{bmatrix} -1 & 0 \\ 0 & 1 \end{bmatrix} \vx + \begin{bmatrix} 0 \\ 1/2\end{bmatrix},
\end{equation}
which are a combination of a reflection $\rmA$ and a half-shift $\vt$. We calculate the weight for the edge from a frequency $\bomega$ to its reflected counterpart $\rmA \bomega$ using the phase formula from Proposition ~\ref{prop:constraint}:
\begin{equation}
    \text{exp}\left(i 2 \pi \begin{bmatrix} \omega_1 & \omega_2\end{bmatrix} \begin{bmatrix} -1 & 0 \\ 0 & 1 \end{bmatrix} \begin{bmatrix} 0 \\ 1/2\end{bmatrix} \right) = \text{exp}(i \pi \omega_2).
\end{equation}
This result provides a simple rule for the edge weights: the weight is $+1$ if $\omega_2$ is even and $-1$ if $\omega_2$ is odd.  This pattern of alternating positive (blue) and negative (red) edge weights is exactly what is visualized in Figure ~\ref{fig:fouriergraph}. 

Each $\bomega$ corresponds to the Fourier mode $e^{i 2 \pi \bomega^\top \bx}.$ We form the basis functions by taking a weighted sum of the Fourier modes in each connected component. For example, the component containing $(-1,0)$ and $(1,0)$ has an edge of weight $+1$ in both directions between the two points. This corresponds to the basis function 
\[ \textstyle e_1(\bx) = e^{i 2 \pi [-1 \; 0] \bx} + e^{i 2 \pi [1 \; 0] \bx}.\]
The component containing $(-1,1)$ and $(1,1)$ has an edge of $-1$ in both directions. This corresponds to the basis function 
\[\textstyle e_2(\bx) = e^{i 2 \pi [-1 \; 1] \bx} - e^{i 2 \pi [1 \;1] \bx}.\]
This $G$-invariant basis construction continues for each connected component in the constraint graph, up until some finite radius cutoff. The information needed to write out the analytic form of the basis functions lies in the edge weights, i.e., the adjacency matrix, of the graph.

\subsection{Crystal Fourier Transformer}\label{sec:cft}

The construction of the constraint graph for the symmetry-adapted Fourier basis provides the backbone for a novel neural architecture that learns group-invariant representations adaptively. We introduce the \emph{Crystal Fourier Transformer (CFT)}, a model designed to process crystallographic data by explicitly building in invariance to its space group symmetry. The core of CFT is an encoding module that transforms atomic coordinates into a feature representation that is invariant to the specified group $G$.

This transformation is a two-step process, visualized in Figure ~\ref{fig:cft_architecture}. First, for an input coordinate $\bx$ and a set of reciprocal lattice frequencies $\{\bomega_k\}_{k=1}^R$, we compute a vector of standard Fourier modes, $\mathbf{v}(\bx) \in \mathbb{C}^R$, where $[\mathbf{v}(\bx)]_k = e^{i2\pi\bomega_k^{\top}\bx}$. This vector itself is not invariant. Second, we apply our key architectural innovation: a linear transformation defined by the pre-computed adjacency matrix $\mathbf{M}_G$ of the dual graph for the specified group $G$. The result is a vector of symmetry-adapted basis functions evaluated at $\bx$:
\begin{equation}
    \mathbf{e}_G(\bx) = \mathbf{M}_G \mathbf{v}(\bx).
\end{equation}
The matrix $\mathbf{M}_G$ acts as a \emph{group-conditional routing mechanism}. It linearly combines the standard Fourier modes according to the exact constraints dictated by the group $G$, effectively projecting the input representation onto the $G$-invariant basis. This mechanism allows a single, fixed downstream architecture to achieve invariance to any of the 230 space groups simply by swapping the corresponding routing matrix $\mathbf{M}_G$.

\begin{figure}[t] \centering 
\includegraphics[width=1.0\linewidth]{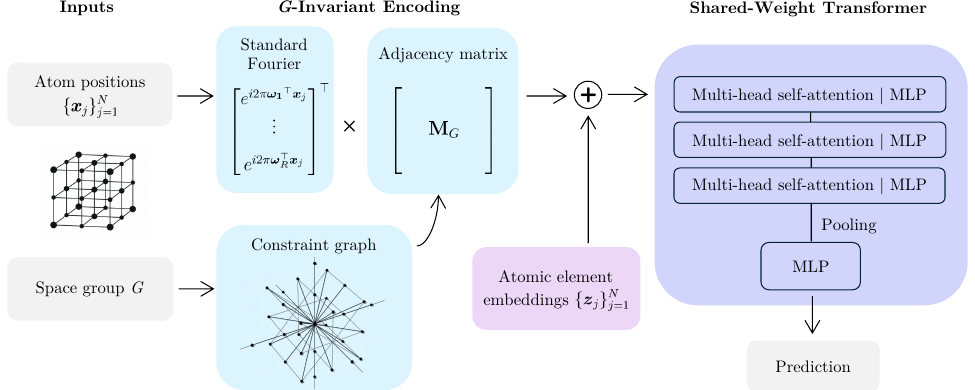} \vspace{-1em}
\caption{Diagram of the Crystal Fourier Transformer architecture. Atom positions are first encoded into standard Fourier modes. A group-conditional routing matrix, $\mathbf{M}_G$ transforms these modes into a provably invariant basis. These adaptive positional encodings, combined with other invariant features, are then processed by a Transformer whose weights are shared across all space groups to predict material properties.} \label{fig:cft_architecture} \end{figure}

CFT leverages this encoding module within a standard encoder-only Transformer architecture \citep{vaswani2017attention}, where the self-attention mechanism allows for the modeling of complex interactions between atoms in a crystal. For each atom in a unit cell, we pass the atom position and the space group of the crystal into the encoding module, and we produce an output embedding that is $G$-invariant and contains symmetry-aware positional information. 

The $G$-invariant positional encoding is directly added to a standard embedding of the atom's chemical element to form the input token for each atom. The resulting sequence of tokens is processed by the Transformer encoder. Because the input features are already $G$-invariant, any function applied to them, including the self-attention mechanism, will produce a $G$-invariant output. This design allows the core parameters of the Transformer to be \emph{shared across all 230 space groups}, enabling the model to generalize effectively even for groups with sparse data. More generally, the output of the $G$-invariant encoding layer can be used as input to any ML model, and the model will become invariant to the symmetries of the input space group while keeping its own weights shared across all space groups.

\section{Related Work}\label{sec:rel_work}
\noindent\textbf{Group Equivariance and Invariance in ML. } Symmetries have a long history in machine learning, dating back to work by \citet{minsky1988perceptrons} discussing invariance in single-layer perceptrons when summing over a finite group of symmetries. This was then extended to multi-layer perceptrons by \citet{shawe1994introducing}, who showed that invariant MLPs can be constructed by partitioning the connections between layers into weight-sharing orbits. The introduction of translation invariance in Convolutional Neural Networks (CNNs) \citep{lecun1989backpropagation} was a breakthrough in computer vision, capturing the symmetries of image data. Many works \citep{cohen2016group, kondor2018clebsch, cohen2019gauge, finzi2020generalizing} have generalized the CNN architecture to be equivariant under additional group symmetries, and it was proven in \citet{kondor2018generalization} that convolutional structure is necessary for equivariance to actions of compact groups. \bigskip 

\noindent\textbf{Symmetries in AI for Science. } For tasks dealing with the symmetries of the physical world, incorporating equivariance to SE(3) and E(3) symmetries has consistently improved performance and data efficiency \citep{cohen2016steerable, cohen2018spherical, fuchs2020se, geiger2022e3nn, du2023new}.  These architectures have found applications across geometry, physics, and chemistry, including protein structure classification \citep{weiler20183d}, learning interatomic potentials and force fields \citep{batatia2022mace, batzner20223}, 3D point cloud segmentation \citep{guo2020deep}, and molecular conformer generation \citep{jing2020learning}. These architectures are tailored to specific compact symmetry groups, leveraging spherical harmonics and tensor products of the irreducible representations \citep{worrall2017harmonic, thomas2018tensor}. While these models capture the symmetries of 3D space, they are unable to capture the infinite discrete symmetries that make up crystallographic structures. \bigskip 

\noindent\textbf{Learning Crystal Representations. } Crystalline materials are typically represented as graphs within ML pipelines.
Since initial work on crystal graph representations by \citet{xie2018crystal} and \citet{chen2019graph}, which use atoms as nodes and bonds as edges, many follow-up works have explored different methods of incorporating additional spatial information and symmetries in the graph representations \citep{choudhary2021atomistic, yan2022periodic, ruff2024connectivity, wang2024conformal, yan2024complete, yan2024space}. A separate line of work focuses on creating canonical, unambiguous crystal representations to ensure that models learn from consistent data inputs \citep{widdowson2022resolving, nigam2024completeness}.
These methods differ from our work in a key aspect. While they implicitly handle or are designed for specific symmetries, the Crystal Fourier Transformer is a single, adaptive architecture that enforces exact $G$-invariance for any of the 230 space groups.

\section{Experiments}\label{sec:experiments}
Our experiments investigate three central questions: (1) Does our symmetry-adapted encoding module learn geometrically meaningful representations that accurately capture the underlying orbit space of a crystal? (2) How does CFT perform on standard, large-scale material property prediction benchmarks against state-of-the-art graph-based models? (3) Does the adaptive nature of our architecture enable effective zero-shot generalization to space groups unseen during training? We demonstrate that CFT learns correct, symmetry-aware distance metrics, achieves competitive performance on benchmark tasks, and generalizes robustly. Full implementation details for the experiments can be found in Appendix ~\ref{sec:exp-supplement}.

\subsection{Experimental Setup}

\noindent\textbf{Dataset and Task. } For our primary benchmark experiments, we use data from the Materials Project, one of the largest databases of computational material properties \citep{jain2013commentary}. The task is to predict four key material properties from the crystal structure: total energy (eV/atom), band gap (eV), bulk modulus (log GPa), and shear modulus (log GPa). For each material, the model receives the atomic numbers and fractional coordinates of atoms within the unit cell, the $3 \times 3$ lattice vectors, and the corresponding space group identifier (1-230). At the time of training, there were 152,149 crystals with known total energies and band gaps, and 11,997 crystals with known bulk and shear moduli in the Materials Project database.\bigskip

\noindent\textbf{Baselines. } We compare CFT against state-of-the-art graph neural network (GNN) architectures designed for crystal structures. These models represent a fundamentally different approach, relying on graph representations of the unit cell and its neighborhood instead of our Fourier basis encoding of atom positions in the unit cell. Our baselines include CGCNN \citep{xie2018crystal}, ALIGNN \citep{choudhary2021atomistic}, and Matformer \citep{yan2022periodic}. We also compare against the same Transformer architecture but with the standard sine and cosine positional encodings from \citep{vaswani2017attention} for each of the 3 dimensions, rather than the proposed $G$-invariant encoding, to isolate the effect of our proposed encoding module.
All baseline models were re-trained and evaluated under identical conditions for fair comparison, with an 80-10-10 train-validation-test split. The numbers may differ from the results reported in the original papers due to the Materials Project dataset having grown significantly in size.

\subsection{Symmetry-Adapted Encodings Capture Orbit Distance}
\label{sec:pos-enc}

A fundamental assumption of our work is that the $G$-invariant encoding module can learn representations that capture the true geometric structure of the orbit space. We first validate this capability in a controlled, self-supervised setting before applying the model to downstream tasks. The goal is to train the encoder to produce positional encodings where the Euclidean distance between embeddings directly corresponds to the true orbit distance $d_G(\vx_1, \vx_2)$ between atoms. The orbit distance is defined as
\begin{equation}
    \textstyle d_G(\bx_1, \vx_2) := \min_{\phi_1, \phi_2 \in G} ||\phi_1(\bx_1) - \phi_2(\bx_2)||_2.
\end{equation}
It measures the shortest Euclidean distance between any two points in the respective orbits of the atoms, which stays invariant to applications of group actions to either atom. \bigskip 

\noindent\textbf{Setup. } We construct a synthetic dataset containing 100,000 samples for each of the 230 space groups. Each sample consists of a pair of random atomic positions as fractional coordinates in $[0,1)^3 $, a space group, and randomly generated lattice vectors that satisfy the constraints of the Bravais lattice for the group. The model takes as input the atomic position and lattice vectors in parallel branches. The position is passed through the  $G$-invariant encoding module and a subsequent ResNet \citep{he2016deep} to encode symmetry information, while the concatenated lattice vectors are passed through a ResNet to encode scale information. The element-wise product of the outputs of the two ResNets is the final positional encoding. The model is trained to minimize the Mean Squared Error (MSE) between the L2 distance of the output embeddings and the ground-truth orbit distance. \bigskip  

\noindent\textbf{Results. } The trained model achieves a test Mean Absolute Error (MAE) of $0.102$ \r{A} against an average orbit distance of $2.724$ \r{A} in the synthetic dataset. This low error means that the embeddings accurately capture the complex, non-Euclidean orbit distance metric across all 230 space groups. We include a qualitative case study of the embeddings for the 2D wallpaper group \texttt{p6m} in Appendix ~\ref{sec:p6m}, for which the embedding map is provably correct.

\subsection{Material Property Prediction}\label{sec:prop-pred}

Having validated our encoding module, we now evaluate the full CFT architecture on the task of predicting material properties from the Materials Project dataset. This experiment assesses CFT's ability to compete with specialized GNNs that have been the dominant paradigm for this task. \bigskip 

\noindent\textbf{Setup. } We use the pretrained positional encoding model from Section \ref{sec:pos-enc} to initialize the positional encodings in CFT. This initialization captures the physical scale of the crystal in this feature space, which is important because the Fourier-based encoding itself is dimensionless. The positional encoding for each atom is directly added to a learned embedding of the atom element, which forms the input tokens to a standard Transformer encoder with three multi-head self-attention layers with eight heads each, followed by mean pooling and a MLP that outputs the final prediction. Because all inputs are $G$-invariant, the Transformer's weights are shared across all 230 space groups, allowing the model to leverage data from common groups to improve predictions for rare ones.
\begin{table}[!htbp]
\centering 
\setlength{\tabcolsep}{4pt}
\begin{tabular}{l c c c c}
\toprule
Method & \makecell{Total Energy \\ eV/atom $\downarrow$} & \makecell{Band Gap \\ eV $\downarrow$} & \makecell{Bulk Moduli \\ log GPa $\downarrow$} & \makecell{Shear Moduli \\ log GPa $\downarrow$} \\ 
\midrule
CFT & $\mathbf{0.197 \pm 0.009}$ & $0.306 \pm 0.006$ & $\mathbf{0.082 \pm 0.008}$ & $\mathbf{0.158 \pm 0.011}$ \\
Transformer & $0.220 \pm 0.019$ & $0.440 \pm 0.014$ & $0.142 \pm 0.012$ & $0.226 \pm 0.016$ \\
CGCNN & $0.453 \pm 0.014$ & $0.343 \pm 0.074$ & $0.093 \pm 0.012$ & $0.178 \pm 0.007$ \\
ALIGNN & $\mathbf{0.203 \pm 0.005}$ & $0.235 \pm 0.002$ & $\mathbf{0.076 \pm 0.002}$ & $\mathbf{0.160 \pm 0.009}$ \\
Matformer & $0.371 \pm 0.011$ & $\mathbf{0.213 \pm 0.003} $ & $\mathbf{0.074 \pm 0.002}$ & $0.179 \pm 0.003$ \\
\bottomrule
\end{tabular}
\caption{Test Mean Absolute Error (MAE) comparisons on the Materials Project dataset. 
Lower values indicate better performance. Results are reported as mean $\pm$ one 
standard deviation over 4 runs. \textbf{Bold} indicates the best performance 
for each property.}
\label{table:material-properties-mae}
\end{table}

\noindent\textbf{Results. } Table \ref{table:material-properties-mae} shows the test MAE for CFT and baseline models. Our CFT model achieves competitive performance across all four properties and outperforms all baselines on Total Energy and Shear Moduli prediction. Notably, it significantly surpasses the Transformer baseline that uses standard positional encodings, demonstrating the usefulness of the symmetry-adapted encoding. CFT outperforms CGCNN across all properties and is on par with ALIGNN and Matformer. 

CFT is using a fundamentally different approach from the graph neural networks, which explicitly encode distances and periodic boundary conditions in the edges \citep{yan2022periodic} and encode additional geometric features such as bond angles \citep{choudhary2021atomistic}. CFT relies on the positional encoding module to capture meaningful geometric information and on the multi-head attention layers to capture complex interactions between atoms. We do additional analysis on the geometric information that the symmetry-adapted positional encodings can capture about an atom's environment in Appendix \ref{sec:gaussian-coeff}. \bigskip 

\noindent\textbf{Efficiency. } A key advantage of our framework is its computational efficiency. Graph-based models typically require constructing neighborhood graphs and performing iterative message passing, which can be computationally intensive and difficult to parallelize. CFT's core symmetry operation is a single matrix-vector product with the precomputed routing matrix $\mathbf{M}_G$. This, combined with the inherent parallelizability of the Transformer architecture, leads to significant speedups. This performance advantage holds even with a larger parameter budget. The empirical results in Table ~\ref{tab:time} show CFT to be substantially faster than ALIGNN and Matformer in both training and inference.
\begin{table}[!htbp]
\centering
\begin{tabular}{l c c c}
\toprule
Method & Parameters & Training time (per epoch) & Inference time (total) \\
\midrule
CFT & 5.34M & 91 sec & 60 sec\\
ALIGNN & 4.03M & 592 sec & 451 sec \\
Matformer & 2.78M & 266 sec & 222 sec\\
\bottomrule
\end{tabular}
\caption{Training and inference time comparison. Training time is measured per epoch on a dataset of 120k crystals. Total inference time is measured on a dataset of 10k crystals. Amortized preprocessing costs, including the pretraining of CFT's positional encoding module and the one-time graph construction for ALIGNN and Matformer, are not included. All models were benchmarked on a single NVIDIA L40 GPU.} \vspace{-1em}
\label{tab:time}
\end{table}

\subsection{Zero-Shot Generalization to Unseen Space Groups}\label{sec:zero-shot}

The core hypothesis behind our adaptive architecture is that, by explicitly parameterizing the group constraints, CFT can generalize to materials from previously unseen space groups. We test this hypothesis in a zero-shot setting by holding out all space groups containing inversion symmetry from the training set. \bigskip

\begin{figure}[t]
    \centering
    \includegraphics[width=1.0\linewidth]{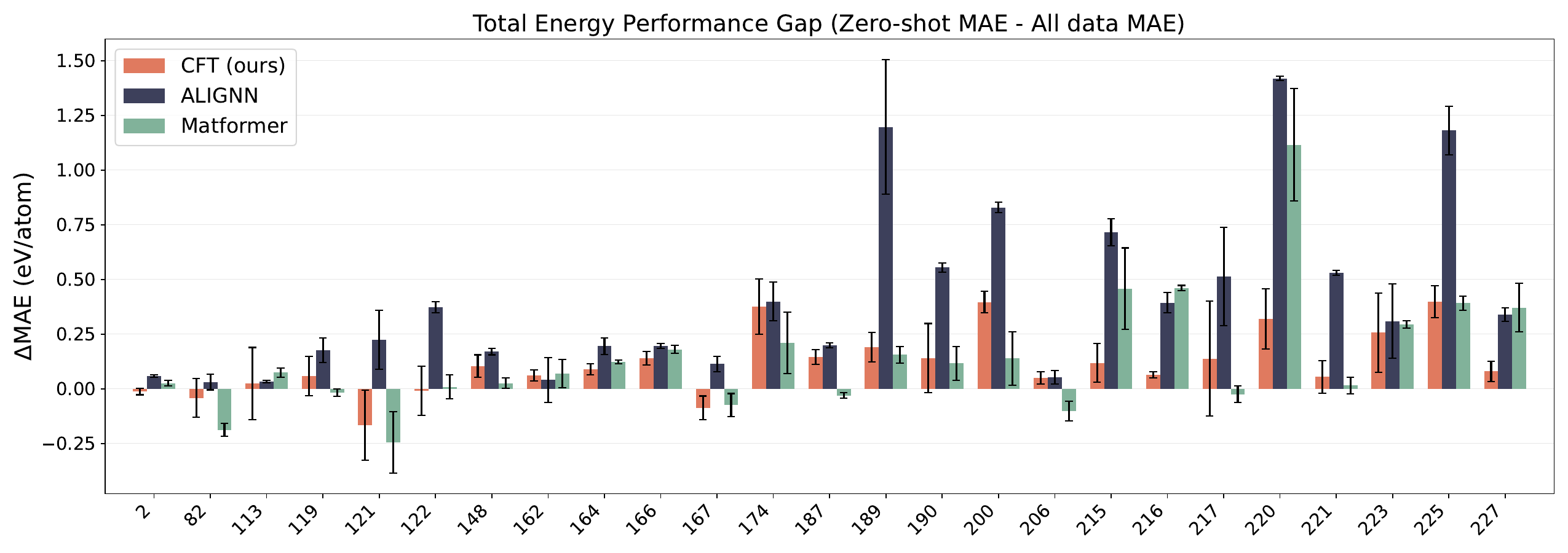} \\
    \includegraphics[width=1.0\linewidth]{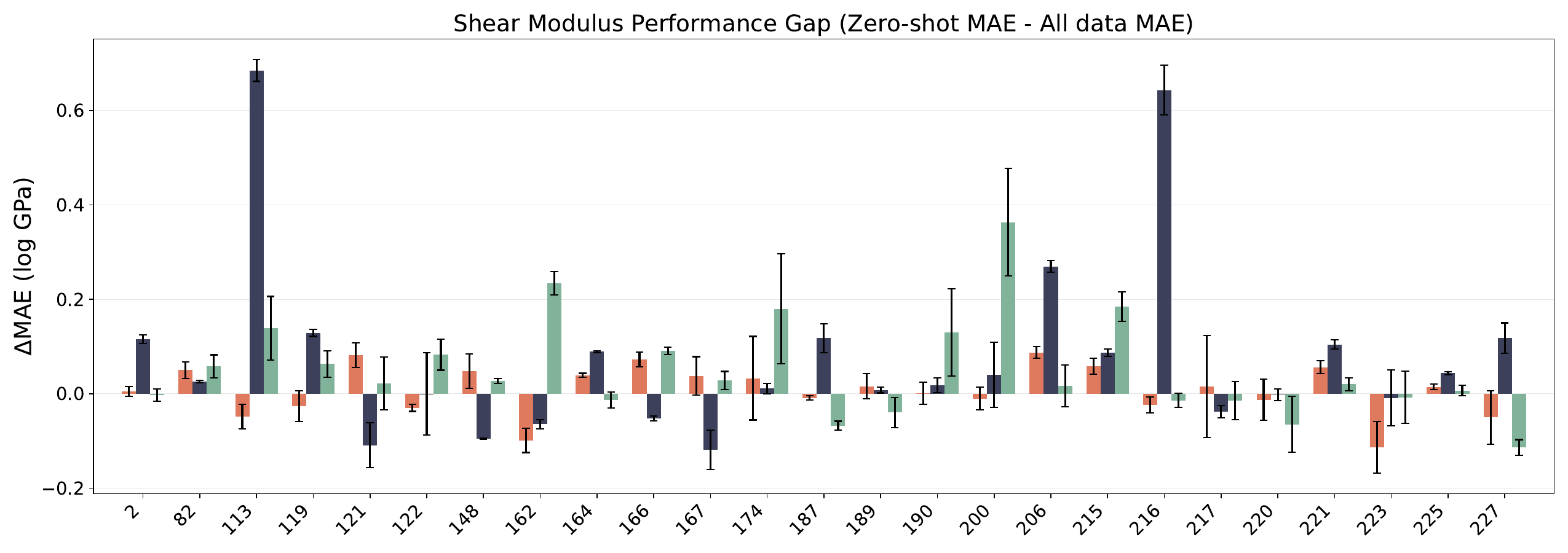} \vspace{-1.3em}
    \caption{We evaluate the zero-shot generalization capability of CFT in predicting the shear modulus and total energy of materials from groups containing inversion symmetry. For each held-out group, we plot the performance gap $\Delta \mathrm{MAE}$ (Eq. \ref{eq:perfgap}), the difference between the zero-shot MAE (trained without any data from inversion groups) and the all-data MAE (trained on all 230 space groups).
    Smaller values are better; $\Delta \mathrm{MAE} = 0$ corresponds to perfect zero-shot generalization to that group.} 
    \label{fig:holdout} \vspace{-0.5em}
\end{figure}

\noindent \textbf{Setup. } 
Let $\gG_{\mathrm{inv}}$ denote the set of space groups with inversion symmetry (centrosymmetric groups). This subset accounts for approximately $49\%$ of the bulk/shear modulus data and $25\%$ of the total-energy data. For each architecture we train two models:
\begin{itemize}
    \item \emph{Zero-shot:} trained only on the 138 non-centrosymmetric space groups $\mathcal{G} \setminus \mathcal{G}_{\mathrm{inv}}$.
    \item \emph{All-data:} trained on the full dataset of 230 space groups.
\end{itemize}
Trained models from both regimes are evaluated on the same test set of held-out inversion groups. For CFT in the zero-shot regime, we simply provide at inference time the precomputed routing matrix $\mathbf{M}_G$ corresponding to the unseen group $G$; no parameter updates or fine-tuning are performed. \\

\noindent \textbf{Results. } 
For each held-out inversion group $G \in \mathcal{G}_{\mathrm{inv}}$, we compute the group-wise MAE under the zero-shot and all-data training regimes, denoted $\mathrm{MAE}^{\text{zero-shot}}_G$ and $\mathrm{MAE}^{\text{all-data}}_G$. We define the \emph{performance gap} for group $G$ as
\begin{equation}
    \Delta \mathrm{MAE}_G = \mathrm{MAE}^{\text{zero-shot}}_G - \mathrm{MAE}^{\text{all-data}}_G.
    \label{eq:perfgap}
\end{equation}
Figure \ref{fig:holdout} plots these signed gaps $\Delta \mathrm{MAE}_G$ for all held-out groups with at least 20 data points for both shear modulus and total energy; values close to zero indicate that the zero-shot model matches the fully supervised model on that group.

To obtain a summary metric that gives each space group equal weight, we define the \emph{group-balanced mean absolute performance gap}
\begin{equation}
    \mathrm{GB\text{-}Gap} = \frac{1}{|\mathcal{G}_{\mathrm{inv}}|}
    \sum_{G \in \mathcal{G}_{\mathrm{inv}}} |\Delta \mathrm{MAE}_G|.
    \label{eq:perf-gap}
\end{equation}
This metric first computes an MAE per space group and then averages across groups, so each held-out group contributes equally regardless of its size.
For shear modulus, the GB-Gap is $0.042$ log GPa for CFT, compared to $0.120$ for ALIGNN and $0.080$ for Matformer.
For total energy, the GB-Gap is $0.141$ eV/atom for CFT, versus $0.309$ for ALIGNN and $0.197$ for Matformer.
CFT thus exhibits the smallest average per-group degradation when moving from the all-data regime to zero-shot generalization.

As shown in Figure \ref{fig:holdout}, CFT maintains small gaps across all held-out groups, whereas ALIGNN and Matformer exhibit pronounced failure modes on several groups. In particular, for groups 113, 162, 200, and 216 the zero-shot shear-modulus MAE of ALIGNN and Matformer exceeds the all-data MAE by more than a factor of five. These failures occur primarily for groups that combine inversion with screws/glides and high cubic or hexagonal symmetry, patterns that are rare among the non-inversion groups used for training. We hypothesize that, in such cases, the GNNs struggle to extrapolate these geometric motifs from related groups, while CFT's explicit symmetry enforcement enables more reliable performance on the unseen groups.

\section{Conclusion}
In this work, we introduce the Crystal Fourier Transformer, a single, adaptive neural architecture capable of enforcing invariance to any of the 230 crystallographic space groups. Our approach is built on a novel method for constructing symmetry-adapted Fourier bases by analytically deriving the linear constraints that group operations impose on Fourier coefficients, which are then encoded in a group-conditional routing matrix. We demonstrate that this weight-sharing model learns geometrically meaningful representations of the crystallographic orbit space, achieves competitive performance on material property prediction benchmarks, and generalizes in a zero-shot setting to unseen space groups, illustrating its potential to overcome data sparsity. Future work could explore analyzing the learned crystal representations to understand what structural features the model captures, as well as integrating our symmetry-adapted encoding module with generative architectures to enable direct generation of crystalline structures that respect space group symmetries.

\section*{Acknowledgments}

The authors would like to thank Keqiang Yan, Kevin Han Huang, Alex Guerra, Drew Novick, Ashwin Sah, and Michael Zhang for helpful discussions and feedback on earlier versions of this work. This work was supported by NSF OAC 2118201 and the Gatsby Charitable Foundation.

\bibliographystyle{customref}
\bibliography{ref}

\newpage 
\appendix
\section{Appendix}
\subsection{Examples}\label{sec:wallpaper}
There are 17 crystallographic groups in $\mathbb{R}^2$, also called wallpaper groups, which describe the possible symmetries when tiling all of two-dimensional space with a convex shape. The tiling behavior described by each of these groups is displayed in Figure \ref{fig:wallpaper}.
\begin{figure}[!htbp]
  \includegraphics[width=0.16\linewidth]{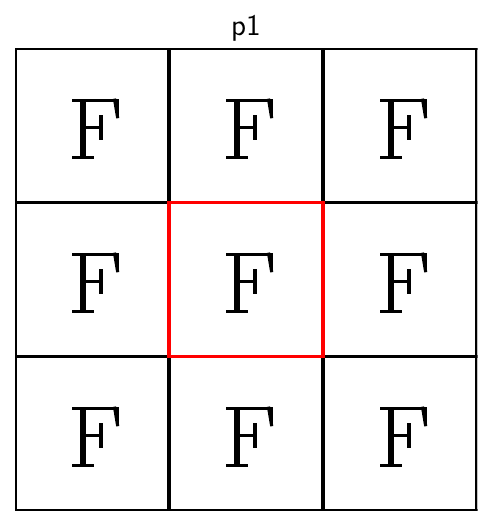} \hfill
  \includegraphics[width=0.16\linewidth]{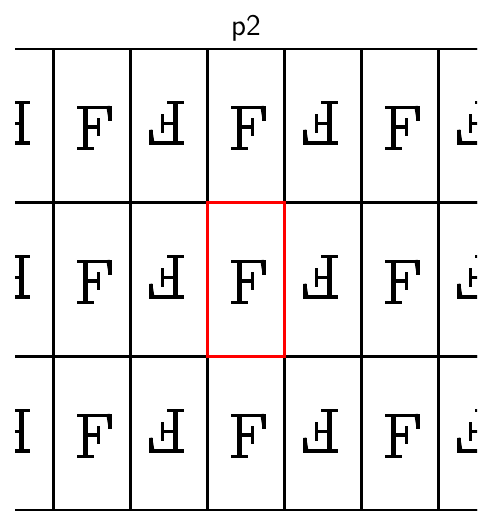} \hfill
  \includegraphics[width=0.16\linewidth]{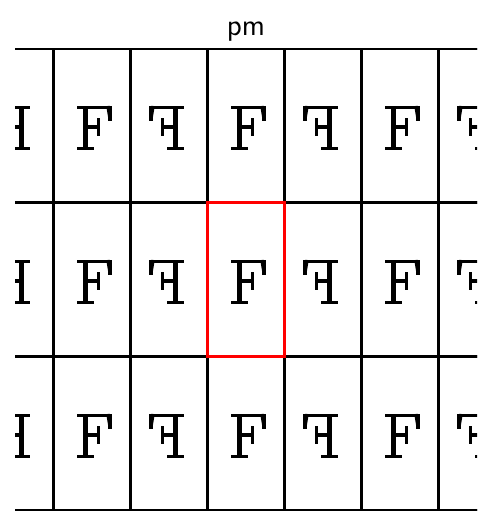} \hfill
  \includegraphics[width=0.16\linewidth]{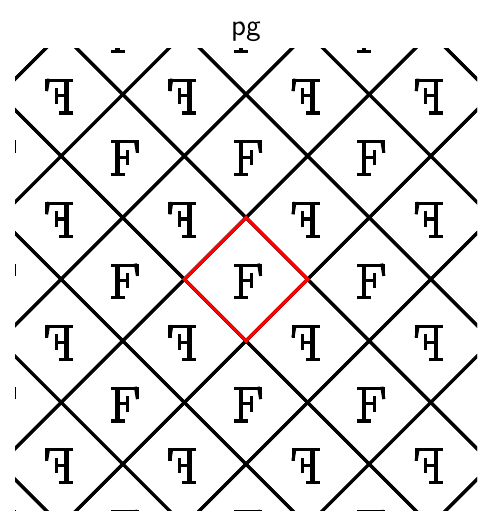} \hfill
  \includegraphics[width=0.16\linewidth]{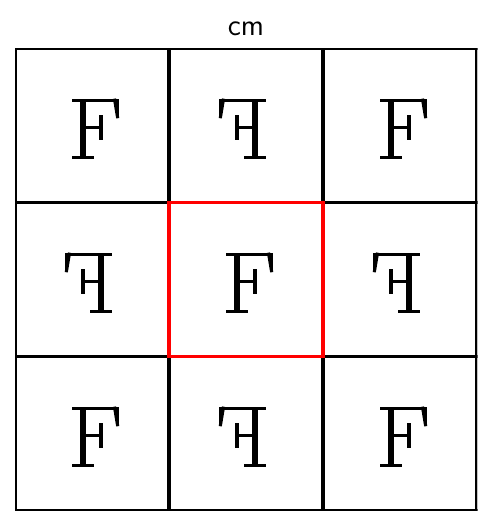} \hfill
  \includegraphics[width=0.16\linewidth]{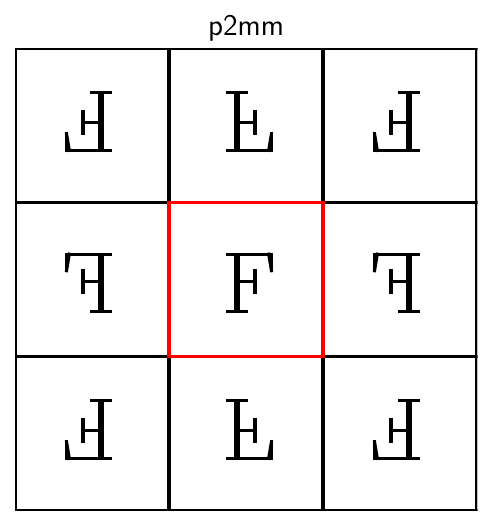}  \\
  \includegraphics[width=0.16\linewidth]{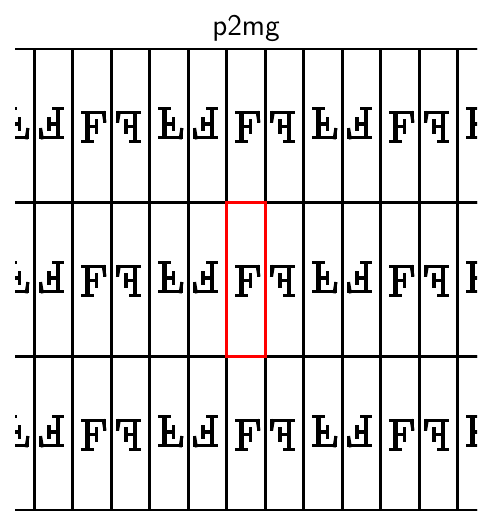} \hfill
  \includegraphics[width=0.16\linewidth]{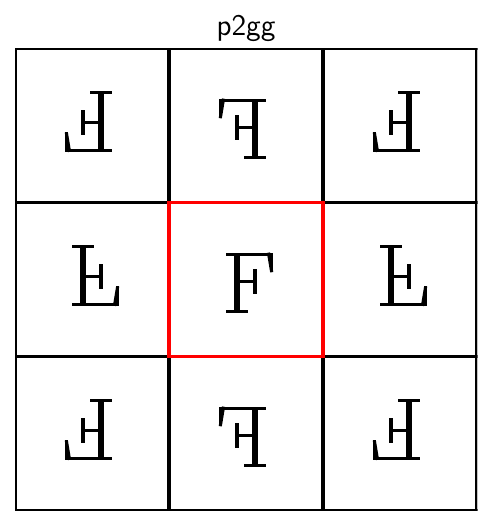} \hfill
  \includegraphics[width=0.16\linewidth]{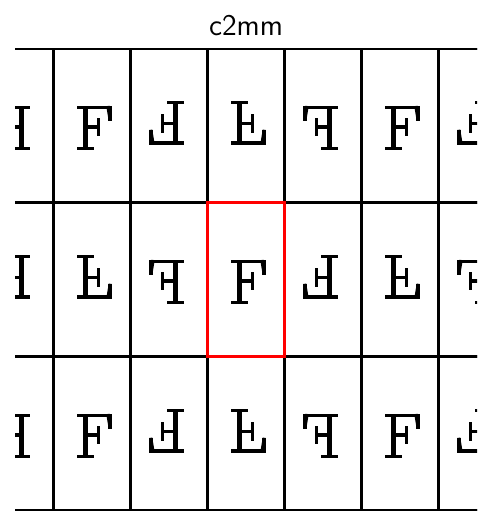} \hfill
  \includegraphics[width=0.16\linewidth]{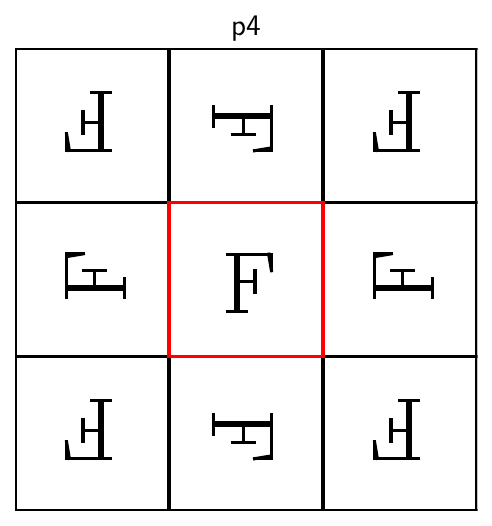} \hfill
  \includegraphics[width=0.16\linewidth]{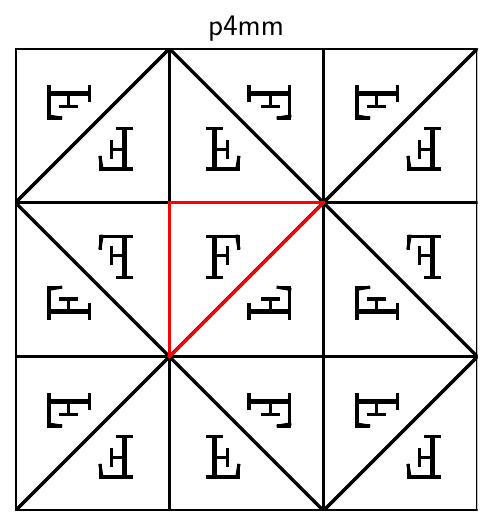} \hfill
  \includegraphics[width=0.16\linewidth]{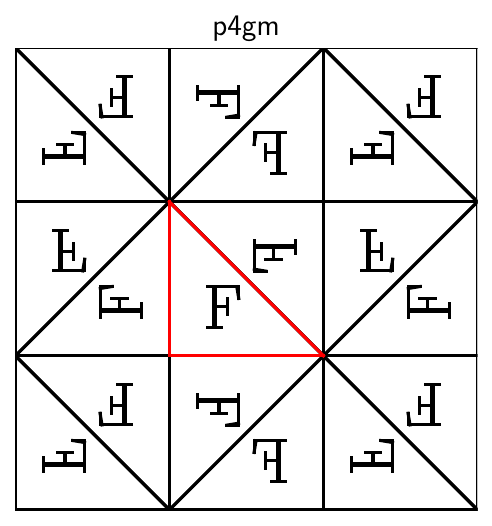}  \\
  \includegraphics[width=0.16\linewidth]{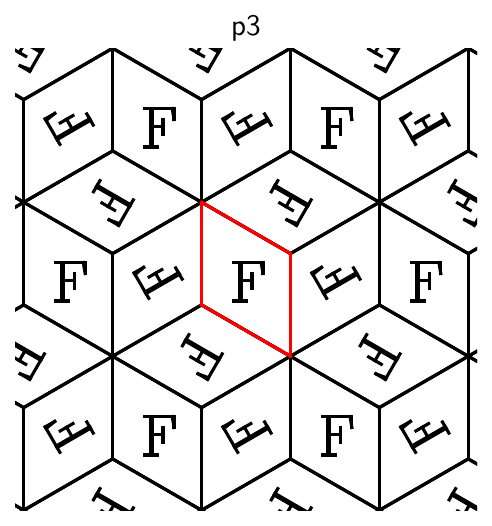} \hfill
  \includegraphics[width=0.16\linewidth]{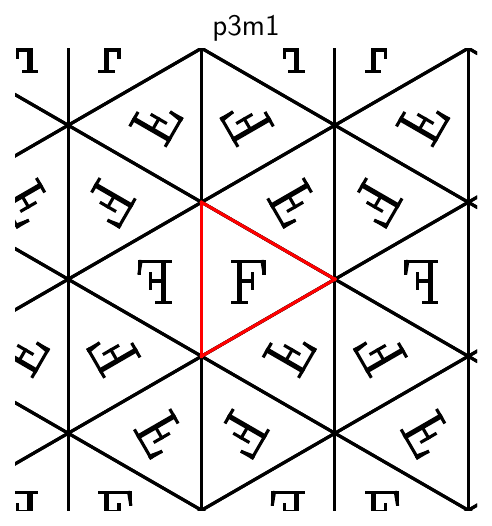} \hfill
  \includegraphics[width=0.16\linewidth]{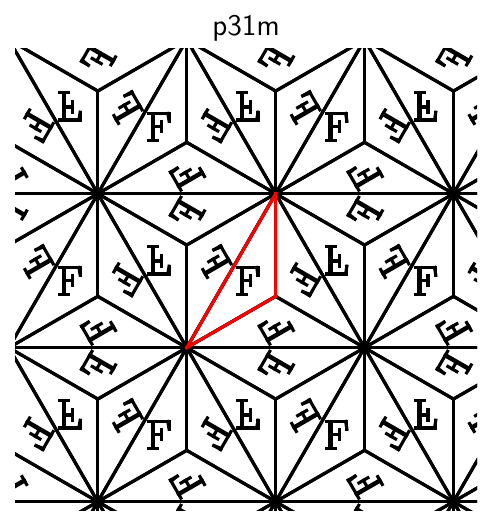} \hfill
  \includegraphics[width=0.16\linewidth]{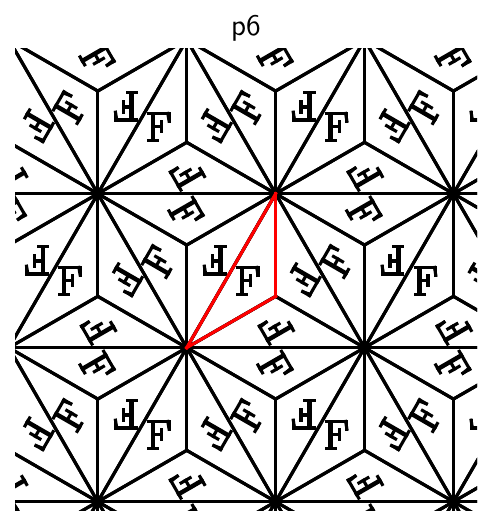} \hfill
  \includegraphics[width=0.16\linewidth]{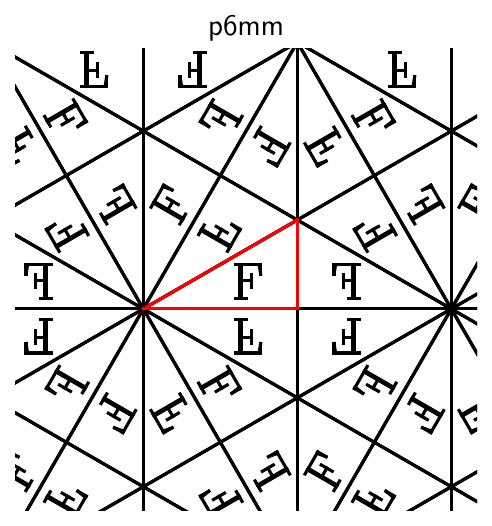} 
\caption{Tiling behavior of the 17 wallpaper groups. The fundamental region $\Pi$, highlighted in red, is the smallest unit that can be repeated to form the entire tiling. }\label{fig:wallpaper}
\end{figure}

\begin{figure}[!htbp]
    \centering
    \includegraphics[width=1.0\linewidth]{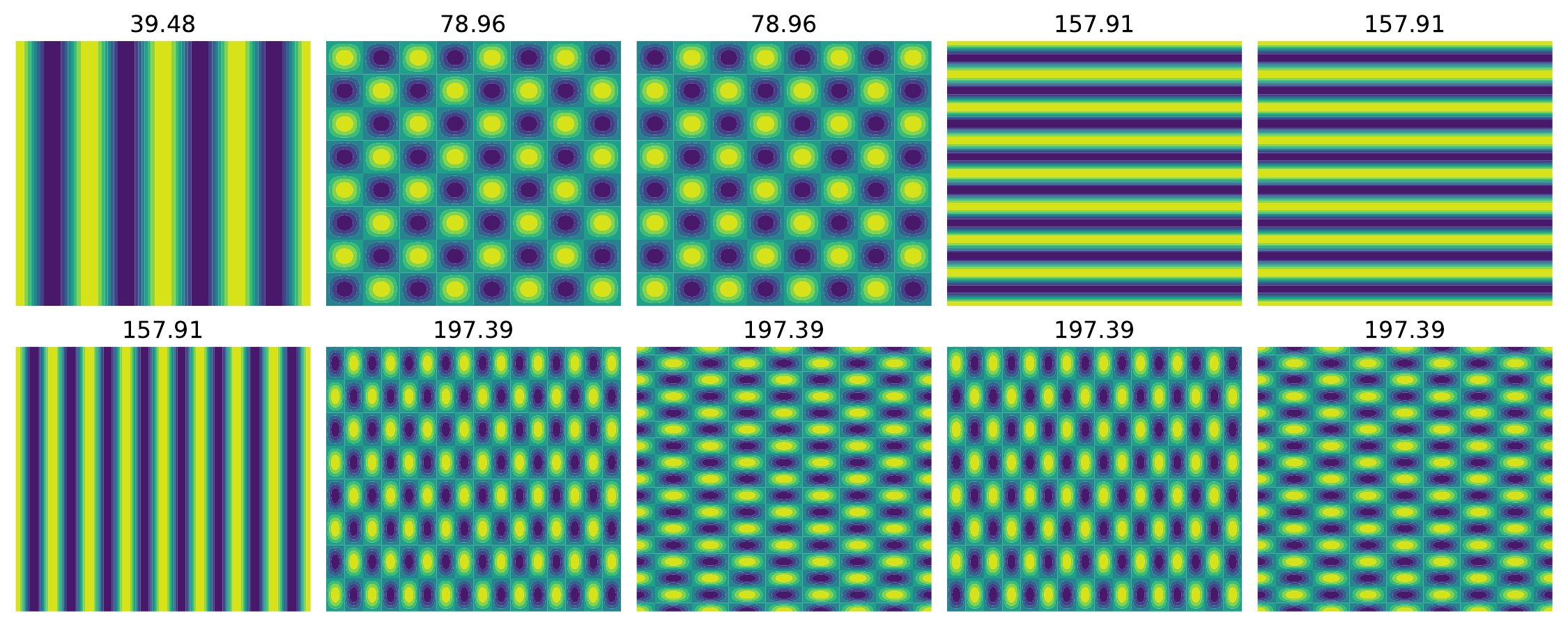}\vspace{-1em}
    \caption{The first 10 non-constant basis functions for the wallpaper group \texttt{pg}. The number displayed above each function is the corresponding eigenvalue from \eqref{eq:laplace}.}
    \label{fig:pg-basis}
\end{figure}
Following the procedure described in Figure \ref{fig:fouriergraph}, we can derive the exact analytic form of the basis functions invariant to any given crystallographic group $G$. By construction, each basis function is of the form 
\[ e_{\mathcal{O}}(\vx) = \sum_{\bomega \in \mathcal{O}} w_{\bxi \rightarrow \bomega} e^{i 2 \pi \bomega^\top \vx},\]
where $\mathcal{O}$ is the orbit of a canonical point $\bxi.$ This is an eigenfunction of the constrained PDE 
    \begin{equation}
        \begin{aligned}
            - \Delta &e = \lambda e \\
            \text{subject to } \qquad &e = e \circ \phi \text{ for all } \phi \in G
        \end{aligned}
    \end{equation}
with corresponding eigenvalue $4\pi^2 ||\bxi||^2.$
The first 10 basis functions for the wallpaper group \texttt{pg}, ordered by corresponding eigenvalue, are shown in Figure \ref{fig:pg-basis}. Our procedure naturally extends to higher-dimensional crystallographic groups; the first 10 basis functions for the space group p$6_3$/mcm are shown in Figure \ref{fig:p6mcm}.

\begin{figure}
    \centering
    \includegraphics[width=1.0\linewidth]{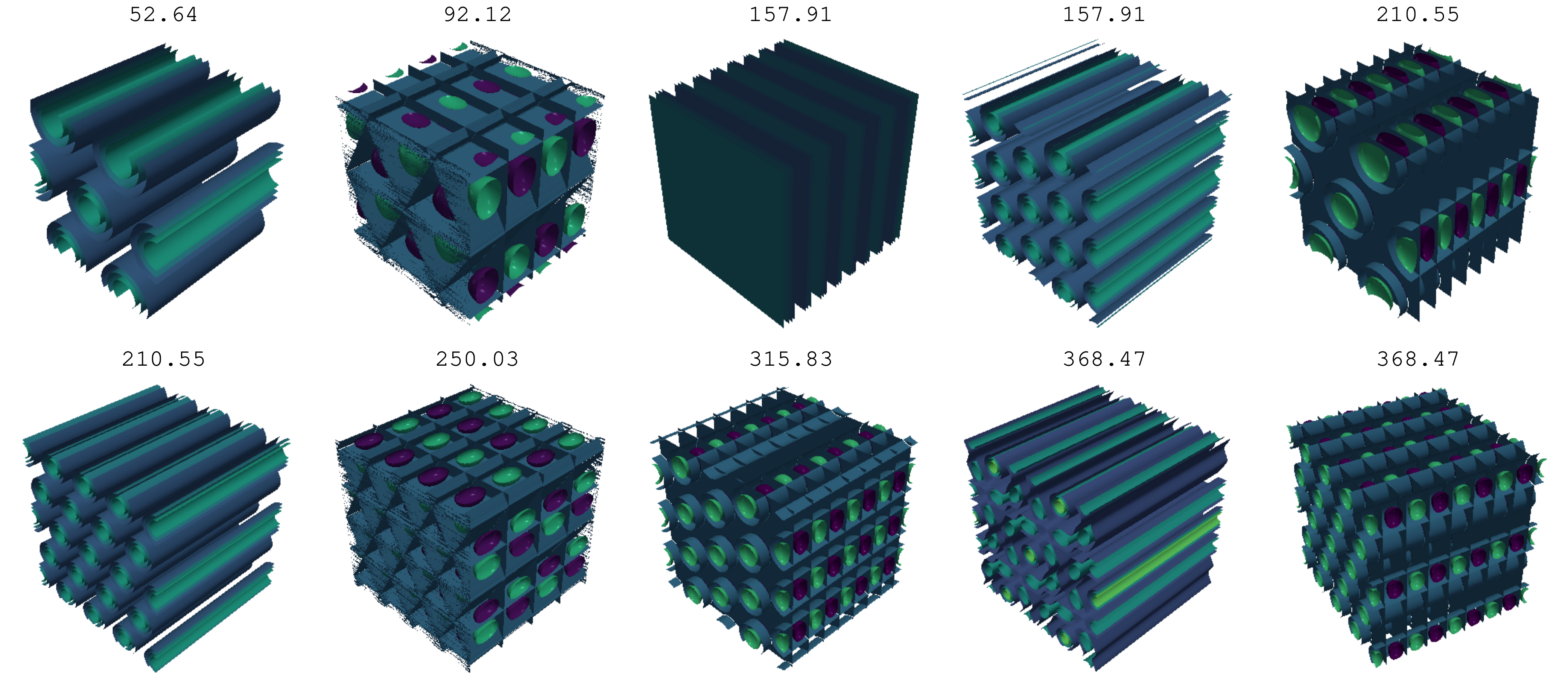}\vspace{-1em}
    \caption{The first 10 non-constant basis functions for space group 193, p$6_3$/mcm. This group contains 6-fold rotational symmetries, 2-fold rotational symmetries, and mirror symmetries visible along different faces.}
    \label{fig:p6mcm}
\end{figure}

\subsection{Proof of Proposition ~\ref{prop:constraint}}\label{sec:constraints}
Consider a function $f: \bbR^n \rightarrow \bbR$ and its Fourier transform:
\begin{equation}
F(\bomega) = \int_{\bbR^n} f(\bx) e^{-i2 \pi \bomega^\top \bx}d\bx.
\end{equation}
Given a crystallographic group $G$ with translations that are integer combinations of $n$ linearly independent basis vectors $\ba_1, \ldots, \ba_n$, a $G$-invariant function must be invariant to all translations of the form $\sum_i c_i \ba_i$ where $\bc \in \bbZ^n$, i.e., we require:
\begin{equation}
f\left(\bx + \sum_i c_i \ba_i\right) = f(\bx) \quad \text{ for all } \bc \in \bbZ^n.
\end{equation}
Shifting a function introduces a phase factor into the Fourier transform. Satisfying the equation above requires:
\begin{equation}\label{eq:translation}
F(\bomega) = e^{i 2\pi \bomega^\top \sum_i c_i \ba_i} F(\bomega) \quad \text{ for all } \bomega.
\end{equation}
For simplicity, let us take $\ba_1, \ldots, \ba_n$ to be the standard basis in $\bbR^n$. Then, in the case where $c_1 = 1$ and all other $c_i = 0$, \eqref{eq:translation} is satisfied if and only if $e^{i 2\pi \omega_1} = 1$ or $F(\bomega) = 0.$ Because $e^{i 2\pi \omega_1} = 1$ if and only if $\omega_1 \in \bbZ$, this simple shift invariance requires that $F(\bomega) = 0$ everywhere that $\omega_1 \not\in \bbZ$.

Applying the same logic to each dimension of $\bomega$, we see that shift invariance implies $F(\bomega) = 0$ when $\bomega \not\in \bbZ^n$, i.e., it can only be nonzero on this discrete grid of points.
These points correspond to the Fourier series and the values of $F(\bomega)$ at those points are the coefficients of the Fourier series. This grid of points is typically referred to as the \textit{reciprocal lattice}.

More generally, our group actions are of the form $\rmA \bx + \bt$, where $\rmA$ is an orthonormal matrix. We first observe how a change-of-variables of the form $T(\bz) = \bB \bz + \bc$ alters the Fourier transform. Taking $\bz = \bB^{-1}(\bx - \bc)$ and $d\bz = |\bB|^{-1}d\bx$:
\begin{align*}
\int_{\bbR^n} f(\bB \bz + \bc) e^{-i 2 \pi \bomega^\top \bz} d\bz &= \frac{1}{|\bB|}\int_{\bbR^n} f(\bx)e^{-i 2\pi \bomega^\top (\bB^{-1}(\bx - \bc))}d\bx \\
&= \frac{1}{|\bB|}\int_{\bbR^n} f(\bx)e^{-i 2\pi \bomega^\top \bB^{-1}\bx}e^{i 2\pi \bomega^\top \bB^{-1}\bc}d\bx \\
&= \frac{e^{i 2\pi \bomega^\top \bB^{-1}\bc}}{|\bB|}\int_{\bbR^n} f(\bx)e^{-i 2\pi (\bB^{-\top} \bomega)^\top \bx} d\bx \\
&= \frac{e^{i 2\pi \bomega^\top \bB^{-1}\bc}}{|\bB|} F(\bB^{-\top} \bomega)
\end{align*}
Given $f(\rmA \bx + \bt) = f(\bx)$, where $|\rmA| = 1$ and $\rmA^{-\top} = \rmA$, this requires
\begin{equation}
F(\bomega) = e^{i2\pi \bomega^\top \rmA^\top \bt}F(\rmA \bomega) \quad \text{ for all } \bomega.
\end{equation}

\subsection{Proof of Theorem \ref{thm:main}}\label{sec:proof_main}
Let $G$ be a crystallographic group and $\mathcal{L}^*$ its reciprocal lattice. By Proposition \ref{prop:constraint}, if $f$ is $G$-invariant with Fourier coefficients $F(\bomega)$, then for any $\phi = (\rmA, \vt) \in G$ and $\bomega \in \mathcal{L}^*$,
\begin{equation}\label{eq:coeff-constraint}
    F(\bomega) = e^{\,i2\pi\,\bomega^\top \rmA^\top \vt}\,F(\rmA\bomega).
\end{equation}
This relation implies that the point–group factors $\rmA$ partition $\mathcal L^*$ into disjoint orbits $\mathcal O\subset \mathcal L^*$.

\begin{definition}[Phase consistency]
Fix an orbit $\mathcal O$ and $\bomega\in\mathcal O$. The stabilizer of $\bomega$ in $G$ is
$G_{\bomega}:=\{(\bA,\bt)\in G:\bA\bomega=\bomega\}$. We call $\mathcal O$ \emph{phase-consistent} if
\[
e^{\,i2\pi\,\bomega^\top \bA^\top \bt}=1 \quad \text{for all }(\bA,\bt)\in G_{\bomega}.
\]
Equivalently (via concatenation derived below), every closed path in the constraint graph on $\mathcal O$ has edge-weight product $1$.
\end{definition}

We encode \eqref{eq:coeff-constraint} as a directed, edge-weighted graph on $\mathcal L^*$: for each $(\bA,\bt)\in G$ and $\bomega\in\mathcal L^*$, place an edge $\bomega\to \bA\bomega$ of weight $w(\bomega\to \bA\bomega):=e^{\,i2\pi\,\bomega^\top \bA^\top \bt}$.

\begin{lemma}[Path concatenation]\label{lem:concat}
If there are edges $(\bomega_1\!\to\!\bomega_2)$ of weight $w_1$ and $(\bomega_2\!\to\!\bomega_3)$ of weight $w_2$, then there is an edge $(\bomega_1\!\to\!\bomega_3)$ of weight $w_1w_2$.
\end{lemma}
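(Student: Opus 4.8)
The plan is to realize each abstract edge by a concrete group element and then invoke closure of $G$ under composition. The edge $\bomega_1 \to \bomega_2$ of weight $w_1$ must come from some $\phi_1 = (\bA_1, \bt_1) \in G$ with $\bomega_2 = \bA_1 \bomega_1$ and $w_1 = e^{i2\pi\,\bomega_1^\top \bA_1^\top \bt_1}$, and likewise the edge $\bomega_2 \to \bomega_3$ of weight $w_2$ comes from some $\phi_2 = (\bA_2, \bt_2) \in G$ with $\bomega_3 = \bA_2 \bomega_2$ and $w_2 = e^{i2\pi\,\bomega_2^\top \bA_2^\top \bt_2}$. The candidate edge $\bomega_1 \to \bomega_3$ will be the one induced by the composite $\phi_2 \circ \phi_1$, which belongs to $G$ by closure.

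First I would compute the composite explicitly: since $\phi_2(\phi_1(\bx)) = \bA_2\bA_1\bx + (\bA_2\bt_1 + \bt_2)$, it is the isometry $(\bA_3, \bt_3)$ with $\bA_3 = \bA_2\bA_1$ and $\bt_3 = \bA_2\bt_1 + \bt_2$. Its linear part sends $\bomega_1$ to $\bA_3\bomega_1 = \bA_2(\bA_1\bomega_1) = \bA_2\bomega_2 = \bomega_3$, so it indeed induces an edge from $\bomega_1$ to $\bomega_3$. It then remains only to verify that this edge carries weight $w_1 w_2$.

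The one nontrivial step is the weight computation, where orthogonality of the point-group factor does the work. Expanding the exponent gives
\[
\bomega_1^\top \bA_3^\top \bt_3 = \bomega_1^\top \bA_1^\top \bA_2^\top (\bA_2 \bt_1 + \bt_2),
\]
and the identity $\bA_2^\top \bA_2 = \mathbf{I}$ collapses the first summand to $\bomega_1^\top \bA_1^\top \bt_1$, while $\bomega_1^\top \bA_1^\top = (\bA_1\bomega_1)^\top = \bomega_2^\top$ turns the second into $\bomega_2^\top \bA_2^\top \bt_2$. Hence the exponent is additive and the induced weight is
\[
e^{i2\pi(\bomega_1^\top \bA_1^\top \bt_1 + \bomega_2^\top \bA_2^\top \bt_2)} = w_1 w_2,
\]
as claimed. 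I expect the main (really the only) subtlety to be this cross-term bookkeeping: the cancellation needs $\bA_2$ orthogonal, which holds for every crystallographic group element by assumption, whereas no property of $\bA_1$ beyond the defining relation $\bomega_2 = \bA_1\bomega_1$ is used.

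A closing remark I would add is that this lemma is precisely what makes the path-weight products in Theorem~\ref{thm:main} and Algorithm~\ref{alg:basis_construction} meaningful: it shows that concatenating two edges multiplies their weights, so iterating it assigns to any directed path a single weight equal to the product of its edge weights. Combined with the phase-consistency hypothesis (which forces every closed loop to have weight-product $1$), this upgrades to path-independence within a component, justifying the notation $w_{\bxi \to \bomega}$ used to define $e_{\mathcal{O}}$.
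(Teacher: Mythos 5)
Your proof is correct and takes essentially the same route as the paper's: both realize the two edges by group elements $\phi_1,\phi_2$, pass to the composite $\phi_2\circ\phi_1 = (\bA_2\bA_1,\;\bA_2\bt_1+\bt_2)$, and expand the induced phase, with the cross term collapsing because $\bA_2^\top\bA_2=\mathbf{I}$. The only difference is presentational: you spell out the orthogonality cancellation and the closure-under-composition step that the paper's one-line computation leaves implicit.
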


\begin{proof}
Let $\varphi_1(\bx)=\bA_1\bx+\bt_1$ and $\varphi_2(\bx)=\bA_2\bx+\bt_2$. The edge $\bomega \to \bA_1\bomega$ has weight $w_1=e^{i2\pi \bomega^\top \bA_1^\top \bt_1}$ and $\bA_1\bomega \to \bA_2\bA_1\bomega$ has weight $w_2=e^{i2\pi(\bA_1\bomega)^\top \bA_2^\top \bt_2}$. Since $\varphi_2\circ\varphi_1(\bx)=\bA_2\bA_1\bx+(\bA_2\bt_1+\bt_2)$, the composed edge $\bomega \to \bA_2\bA_1\bomega$ has weight
\[
e^{i2\pi \bomega^\top(\bA_2\bA_1)^\top(\bA_2\bt_1+\bt_2)}
= e^{i2\pi\big(\bomega^\top \bA_1^\top \bt_1 + (\bA_1\bomega)^\top \bA_2^\top \bt_2\big)}
= w_1w_2,
\]
as claimed.  {\tiny\qedhere}
\end{proof}

Fix a phase-consistent orbit $\mathcal O$ and a reference $\bxi\in\mathcal O$. For any $\bomega\in\mathcal O$, let $p_{\bxi\to\bomega}$ be a path in the constraint graph from $\bxi$ to $\bomega$, and define
\[
w_{\bxi\to\bomega}\;:=\;\prod_{e\in p_{\bxi\to\bomega}} w(e).
\]
By Lemma~\ref{lem:concat}, edge weights multiply under path concatenation. The definition of $w_{\bxi\to\bomega}$ is independent of the chosen path \emph{iff} every closed cycle has unit product, i.e., it satisfies the phase-consistency condition. If an orbit fails phase-consistency, then traversing a nontrivial cycle gives a non-unit factor, and \eqref{eq:coeff-constraint} around that cycle is only satisfied when $F(\bomega)=0$ for all $\bomega\in\mathcal O$; hence only phase-consistent orbits can support nonzero coefficients. If there are two paths from $\bxi$ to $\bomega$ that produce different $w_{\bxi\to\bomega}$, then following one path from $\bxi$ to $\bomega$ and the other path from $\bomega$ to $\bxi$ produces a closed cycle with non-unit product.

For a phase-consistent orbit $\mathcal O$, choose an arbitrary scalar $c_{\bxi}\in \mathbb{C}$ and set
\[
F(\bomega) = c_{\bxi}\,w_{\bxi\to\bomega}, \qquad \bomega\in\mathcal O.
\]
Then \eqref{eq:coeff-constraint} holds identically on $\mathcal O$ (shifting $\bomega\mapsto\bA\bomega$ multiplies $w_{\bxi\to\bomega}$ by the appropriate edge weight), so the inverse Fourier series yields the $G$–invariant function
\[
e_{\mathcal O}(\bx)\;=\;\sum_{\bomega\in\mathcal O} w_{\bxi\to\bomega}\, e^{\,i2\pi\,\bomega^\top \bx}.
\]
Distinct orbits $\mathcal O\neq\mathcal O'$ have disjoint frequency supports, whence $e_{\mathcal O}\perp e_{\mathcal O'}$ in $L^2(\Pi)$ by Fourier orthogonality on $\Pi$.

Now let $f$ be any $G$–invariant function. Its Fourier coefficients vanish off $\mathcal L^*$ and satisfy \eqref{eq:coeff-constraint} on each orbit. For any orbit $\mathcal O$ that is not phase-consistent, the cycle constraint forces $F\equiv 0$ on $\mathcal O$; for a phase-consistent $\mathcal O$, the coefficients are proportional to $\{w_{\bxi\to\bomega}\}_{\bomega\in\mathcal O}$. Therefore
\[
f(\bx)\;=\;\sum_{\mathcal O \ \text{phase-consistent}} c_{\mathcal O}\, e_{\mathcal O}(\bx),
\]
i.e., $\{e_{\mathcal O}\}$ spans the subspace of $G$–invariant functions in $L^2(\Pi)$. This construction is an explicit realization of the representation theorem of \citet{adams2023representing}, which guarantees a complete orthogonal basis of constrained Laplace eigenfunctions for continuous $G$–invariant functions.

\subsection{Deriving Fourier Coefficients for Gaussian Density}\label{sec:gaussian-coeff}
We show one way in which the symmetry-adapted Fourier basis can encode atom positions in a crystal in a way that captures the atom's environment. 
Consider a crystal from space group $G$ with a unit cell is defined by $\bB = \begin{bmatrix} \bb_1 & \bb_2 & \bb_3 \end{bmatrix}^\top \in \mathbb{R}^{3 \times 3}$ (rows of $\bB$ are basis vectors). Given an atom in the crystal at position $\bx \in \bbR^3$, we want to encode the environment of this atom given by the summation of isotropic Gaussians centered at each atom in the orbit of $\bx.$ Recall that each isometry $\phi \in G$ takes the form $\phi(\bx) = \rmA \bx + \bt,$ where $\rmA$ is an orthonormal matrix and $\bt$ is a translation vector. The density of interest is 
\begin{equation}
    \rho(\by) = \sum_{(\rmA,\bt) \in G} \text{exp}\left(-\frac{||\by-(\rmA \bx + \bt)||^2}{2\sigma^2}\right).
\end{equation}
We can approximate this density $\rho$ with a Fourier series. We begin by taking the Fourier transform in terms of $\by$, where 
\begin{equation*}
    \hat{\rho}(\bomega) = \int_{\mathbb{R}^3} \rho(\by) e^{-2 \pi i \bomega \cdot \by} d \by.
\end{equation*}
Recall that for an isotropic Gaussian $g(y; \mu, \sigma) = e^{-||y-\mu||^2/(2\sigma^2)}$, its Fourier transform is given by
\begin{equation}
    \hat{g}(\omega) = (2\pi)^{3/2} \sigma^3 e^{-2 \pi^2 \sigma^2 ||\omega||^2 - 2 \pi i \omega \cdot \mu}.
\end{equation}
Our density can be rewritten as
\begin{equation*}
    \rho(\by) = \sum_{(\rmA,\bt) \in G}  g(\by; \rmA \bx + \bt, \sigma).
\end{equation*}
By linearity, 
\begin{align*}
    \hat{\rho}(\bomega) &= \sum_{(\rmA,\bt) \in G} (2\pi)^{3/2} \sigma^3 e^{-2 \pi^2 \sigma^2 ||\bomega||^2 - 2 \pi i \bomega \cdot (\rmA \bx + \bt)} \\
    &= (2\pi)^{3/2} \sigma^3 e^{-2 \pi^2 \sigma^2 ||\bomega||^2} \sum_{(\rmA,\bt) \in G} e^{-2 \pi i  \bomega \cdot (\rmA \bx + \bt)} \\
    &= (2\pi)^{3/2} \sigma^3 e^{-2 \pi^2 \sigma^2 ||\bomega||^2} \sum_{(\rmA,\bt) \in \hat{G}} e^{-2 \pi i  \bomega \cdot (\rmA \bx + \bt)}  \sum_{\bell \in L} e^{-2 \pi i  \bomega \cdot \bell},
\end{align*}
where the lattice $L$ is generated by all linear combinations of the basis vectors in $\bb_1, \bb_2, \bb_3 $, and $\hat{G}$ is the finite subset of $G$ satisfying
\[ \hat{G} = \left\{(\rmA,\bt) \in G \; \vert \; \bt = t_1 \bb_1 + t_2 \bb_2 + t_3 \bb_3, \; t_i \in [0,1) \right\}. \]
Applying Poisson summation to the lattice comb, we have
\begin{equation}
    \sum_{\bell \in L} e^{-2 \pi i  \bomega \cdot \bell} = |\bB| \sum_{\bk \in L^*} \delta(\bomega - \bk)
\end{equation}
where $L^*$ denotes the reciprocal lattice. Hence
\begin{equation}
    \hat{\rho}(\bomega) = (2\pi)^{3/2} \sigma^3 |\bB| e^{-2 \pi^2 \sigma^2 ||\bomega||^2} \sum_{\bk \in L^*} \left[ \sum_{(\rmA,\bt) \in \hat{G}} e^{-2 \pi i  \bomega \cdot (\rmA \bx + \bt)}  \right] \delta(\bomega - \bk).
\end{equation}
Taking the inverse Fourier transform, we see that the target density $\rho(\by)$ can be written as a Fourier series where the coefficient of the term $e^{i 2 \pi \bk \cdot \by}$, for $\bk \in L^*$, is 
\begin{equation}
    \tilde{\rho}[\bk] = (2\pi)^{3/2} \sigma^3 e^{-2 \pi^2 \sigma^2 ||\bk||^2} \sum_{(\rmA,\bt) \in \hat{G}} e^{-2 \pi i  \bk \cdot (\rmA \bx + \bt)}.
\end{equation}
When converting from fractional to Cartesian coordinates, replace $\bk_f \in \mathbb{Z}^3$ with $\bk_c = \bB^{-\top} \bk_f.$

\subsection{Experiment Details}\label{sec:exp-supplement}
In this section, we describe our experimental setup in more detail. We start with the pretraining setup for the symmetry-adapted positional encodings in Section \ref{sec:pos-enc}, then give configuration details for the material property prediction experiments in Section \ref{sec:prop-pred} and the zero-shot learning experiments in Section \ref{sec:zero-shot}.

\subsubsection{Pretraining Positional Encodings}\label{sec:pretraining}

The central objective of our pretraining is to map atomic positions into an embedding space where the Euclidean distance between embeddings corresponds to the real-space orbit distance between the atoms. The orbit distance
\begin{equation}
    \textstyle d_G(\bx_1, \vx_2) := \min_{\phi_1, \phi_2 \in G} ||\phi_1(\bx_1) - \phi_2(\bx_2)||_2.
\end{equation}
is the minimum Euclidean distance between any two atoms in the orbits of the two initial atoms, considering all symmetry operations of the crystal's space group and periodic boundary conditions.

Our model produces embeddings \( \mathbf{e}_1 = f(\mathbf{p}_1, G, \mathbf{L}) \) and \( \mathbf{e}_2 = f(\mathbf{p}_2, G, \mathbf{L}) \). The pretraining loss is the mean squared error (MSE) between the orbit distance and the L2 distance between the corresponding embeddings:
\begin{equation}
    \mathcal{L}_{\text{pretrain}} = \left( d_{G}(\mathbf{p}_1, \mathbf{p}_2) - \| \mathbf{e}_1 - \mathbf{e}_2 \|_2 \right)^2.
\end{equation}
The model is designed to explicitly handle crystallographic symmetries and lattice geometry through a two-branch architecture:
\begin{itemize}[itemsep = 0em]
    \item Symmetry-Adapted Position Branch: The model first generates a symmetry-adapted representation from the fractional coordinates. The input fractional positions are converted into a set of Fourier features, \( \exp(i \cdot 2\pi \cdot \mathbf{p} \cdot \mathbf{k}) \), where \( \mathbf{k} \) represents a set of reciprocal lattice vectors. This Fourier representation is then processed by a series of 3 residual blocks to produce a position-based feature vector.
    \item Lattice Geometry Branch: The $3\times 3$ lattice vectors are flattened into a $9$-dimensional vector and fed into a separate network of 3 residual blocks. This branch learns to capture the scale and geometry of the unit cell, which is essential for converting dimensionless fractional distances into real-space distances.
\end{itemize}
Each residual block consists of two dense layers, with LayerNorm and ReLU after each layer, that expands the feature vector to length 512 then projects back to length 256. The outputs from the two branches are combined via element-wise multiplication. The resulting vector is then passed through a final MLP to produce the 128-dimensional positional embedding. The model was trained for 200 epochs using the Adam optimizer with a learning rate of 0.0002 and a batch size of 2000.

\subsubsection{Material Property Prediction}
We use 80\% training, 10\% validation, and 10\% test data splits across all property prediction experiments. All models are trained for 500 epochs. \bigskip 

\noindent\textbf{Crystal Fourier Transformer. } Each atom in a crystal is represented by an input token, which is the sum of a 128-dimensional standard learnable atom embedding based on the atomic number, and a 128-dimensional positional encoding from the pretrained encoding module. 

The sequence of input tokens is processed by a series of 3 Transformer blocks. Each block consists of a multi-head self-attention layer and a feed-forward network. These blocks have embedding dimension 128 and 8 attention heads per block. The feed-forward network contains two dense layers, the first one expanding the embedding to dimension 512 before projecting back to dimension 128, using SiLU activation and LayerNorm after each layer. After the final Transformer block, we perform masked mean pooling over the atom tokens to produce a single fixed-size vector representation for the entire crystal. This vector is then passed through a two-layer MLP with hidden dimensions of 2048 and 256 to produce the final scalar prediction.

The model was trained using the AdamW optimizer. The training  runs were done with learning rate 0.0001, weight decay 0.0001, and batch size 128. \bigskip 

\noindent\textbf{Matformer. } We trained Matformer models on the 2025 Materials Project dataset using the code provided by the original authors \citet{yan2022periodic}. We use the AdamW optimizer with a learning rate of 5e-4 and a weight decay of 1e-5, coupled with a OneCycleLR scheduler for learning rate adjustments. The models were trained with a batch size of 64, the maximum given the memory constraints of an NVIDIA L40 GPU. For graph construction, a k-nearest neighbor strategy is used to identify the 12 nearest neighbors within an 8 \r{A} cutoff radius. Each atomic number is mapped to a 92-dimensional embedding using the CGCNN \citep{xie2018crystal} atomic features, before a linear transformation maps it to a 128-dimensional feature vector used as input to the first Matformer message-passing layer. The crystal lattice vectors are used in the graph construction, but information about bond angles is not. \bigskip 

\noindent\textbf{ALIGNN} We also retrain the ALIGNN models using the code provided by the original authors \citep{choudhary2021atomistic}. The model comprises 4 ALIGNN layers and 4 subsequent GCN layers, both utilizing 256 hidden features. The model processes 92-dimensional CGCNN-style atomic number embeddings for nodes, while bond distances and triplet bond angles were expanded using 80 and 40 radial basis functions, respectively, and embedded into a 64-dimensional feature space. Crystal graphs were also constructed with a k-nearest neighbor strategy to identify the 12 nearest neighbors within an 8 \r{A} cutoff radius. We use the AdamW optimizer, a batch size of 32, a learning rate of 5e-4, and a weight decay of 1e-5, coupled with a OneCycleLR scheduler. 

\subsubsection{Zero-Shot Generalization to Unseen Groups}
In these experiments, we define the hold-out set to be all space groups containing inversion symmetries. This subset represents a significant portion of the data: approximately 49\% of the Materials Project dataset for bulk/shear modulus and 25\% of the dataset for total energy. The CFT (zero-shot) model is trained exclusively on data from the 138 space groups that do not contain inversion symmetries, whereas the CFT (all data) model is trained on data from all 230 space groups. Both models use the same hyperparameters and are evaluated on the held-out inversion groups to calculate the performance gap.

We use the same CFT architecture as in the previous experiments: 3 Transformer blocks, each consisting of a multi-head self-attention layer and a feed-forward network. These blocks have embedding dimension 128 and 8 attention heads per block. After the final Transformer block, we perform masked mean pooling over the atom tokens and then pass through a two-layer MLP with hidden dimensions of 2048 and 256 to produce the final scalar prediction.

\subsection{Orbit Distances in p6m}\label{sec:p6m}
 We can visualize the learned embeddings for the 2D wallpaper group \texttt{p6m}. Given a tiling pattern with \texttt{p6m} symmetries and a choice of a canonical fundamental region (e.g., the one highlighted in red on the left in Figure \ref{fig:p6m}), the mapping of each point $\vx$ to the unique point in its group orbit $\{\phi(\vx) \; \vert \; \phi \in G\}$ that lies in the fundamental region is a known isometric mapping where the standard Euclidean distance perfectly preserves orbit distance. As shown in Figure \ref{fig:p6m}, our model's learned 2D embeddings, using the pretraining setup described in \ref{sec:pretraining}, autonomously discover and reproduce this exact geometric mapping without any prior knowledge of it. Different random seeds all produce some rotation of this mapping. This provides strong evidence that our architecture learns the correct underlying metric of the orbit space.

\begin{figure}[!htbp]
    \centering
    \includegraphics[width=1.0\linewidth]{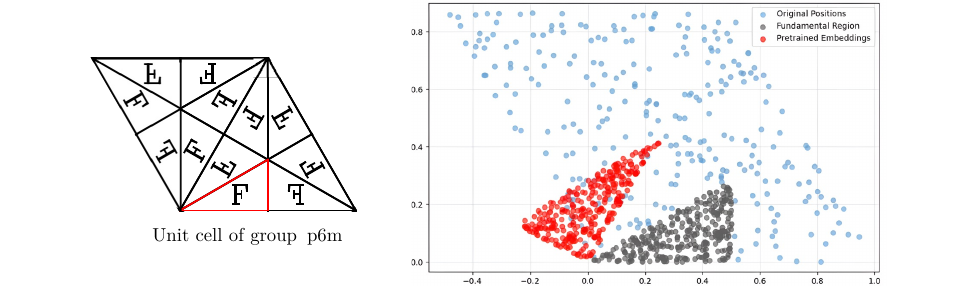} \vspace{-1em}
    \caption{In the special case of wallpaper group \texttt{p6m}, which consists of 6-fold rotational symmetry and mirror symmetries, mapping any point to its image in the fundamental region (right triangle bordered in red on the left) exactly preserves the orbit distance between every pair of points. On the right, our $G$-invariant learned positional encodings discover the same isometric mapping in red, without any explicit encoding of this mapping.}
    \label{fig:p6m}
\end{figure}

\end{document}